\newcommand{\cond}{\cP^{G'}\supset \cP^G}
\newcommand{\za}{\text{flu}}
\newcommand{\zb}{\text{hayfever}}
\newcommand{\xa}{\text{muscle pain}}
\newcommand{\xb}{\text{congestion}}
\newenvironment{bff}[1][]
{\par 
\medskip 
\noindent \textbf{#1}}{
    \par
    \medskip
    \noindent}
\tikzstyle{neuron} = [circle,draw, minimum size=.63cm, inner sep=0, ]
\tikzstyle{iarrow} = [->]
\tikzstyle{aarrow} = [->, red]
\tikzstyle{farrow} = [<-]
\title{Inversion of Bayesian Networks}
\author[1]{Jesse van Oostrum}
\author[2]{Peter van Hintum}
\author[1, 3, 4]{Nihat Ay}
\affil[1]{Institute for Data Science Foundations, Hamburg University of Technology, Hamburg, Germany}
\affil[ ]{\textit {jesse.van@tuhh.de}}
\affil[2]{New College, University of Oxford, Oxford, UK}
\affil[3]{Leipzig University, Leipzig, Germany}
\affil[4]{Santa Fe Institute, Santa Fe, USA}
\date{}
\begin{document}

\maketitle

\begin{abstract}
    Variational autoencoders and Helmholtz machines use a recognition network (encoder) to approximate the posterior distribution of a generative model (decoder). In this paper we establish some necessary and some sufficient properties of a recognition network so that it can model the true posterior distribution exactly. These results are derived in the general context of probabilistic graphical modelling / Bayesian networks, for which the network represents a set of conditional independence statements. We derive both global conditions, in terms of d-separation, and local conditions for the recognition network to have the desired qualities. It turns out that for the local conditions the \emph{perfectness} property (for every node, all parents are joined) plays an important role.  
\end{abstract}

\section{Introduction}
A generative model is a set of probability distributions that models the distribution of observed and latent variables.  
Generative models are used in many  machine learning applications. One is often interested in inferring the latent variable on the basis of an observation, i.e.\ obtaining the posterior distribution. For complex generative models it is often hard to calculate the posterior distribution analytically. The field of variational Bayesian inference \citep{wainwright2008graphical} studies different ways of approximating the true posterior. One approach within this field is called \textit{amortised inference} \citep{gershman2014amortized}. This approach distinguishes itself through using one set of parameters for recognition that is optimised over multiple data points. This can be contrasted with ``memoryless" inference algorithms, such as the message passing algorithm \citep{pearl1982reverend,cowell1999}, which finds a separate set of parameters for every data point. Both the variational autoencoder (VAE) \citep{kingma2013auto} and the Helmholtz machine \citep{dayan1995helmholtz} are examples of amortised inference. In their most general form, these consist of a Bayesian network that is used to model the generative distribution. A second network, called the recognition model, is used to model the posterior distribution. Both these networks have the same set of nodes, namely the union of the observed and latent variables. In the generative network the arrows point from the latent to the observed nodes but in the recognition network it is the other way around. The recognition network is therefore in some sense an inversion of the generative network. In many applications, one simply reverses the direction of the edges of the generative network to obtain the recognition network. However, as the simple example in Figure \ref{figure:invertingEdgesInsufficient} shows, this does not guarantee that the recognition model is actually able to model the true posterior distribution of the generative model. In this paper, we establish some necessary and some sufficient properties of the recognition network such that we do have this guarantee. 
We first discuss these properties in terms of d-separation, subsequently in terms of perfectness, and finally in terms of single edge operations using the Meek conjecture \citep{meek1997graphical}.\\

\begin{figure}[ht]
    \centering
    \begin{tikzpicture}
        \node (x) {
            \begin{tikzpicture}
                \node (a) at (0,0) [neuron] {$x$};
                \node (b) at (-.7,1) [neuron] {$z_1$};
                \node (c) at (.7,1) [neuron] {$z_2$};
                \node (g) at (-1.3, 1.8) {$G$};
                \draw[->] (b) to (a);
                \draw[->] (c) to (a);
            \end{tikzpicture}
        };
        \node (y) [right=of x] {
            \begin{tikzpicture}
                \node (a) at (0,0) [neuron] {$x$};
                \node (b) at (-.7,1) [neuron] {$z_1$};
                \node (c) at (.7,1) [neuron] {$z_2$};
                \node (g) at (-1.3, 1.8) {$G'$};
                \draw[->] (a) to (b);
                \draw[->] (a) to (c);
            \end{tikzpicture}
        };
    \end{tikzpicture}
    \caption{Pair of DAGs $G$ (left) and $G'$ (right) where $G'$ is obtained by reversing the direction of the edges in $G$. The variables $z_1, z_2$ represent the latent variables and $x$ the observed variable. The distribution $p$ such that $z_1, z_2$ are independent $\mathrm{Bernoulli}(0.5)$ and $x = z_1 + z_2 \mod 2$ can be modelled by $G$, but the conditional distribution $p_{z_1,z_2|x}$ cannot be modelled by $G'$. 
    }
    \label{figure:invertingEdgesInsufficient}
    \end{figure}
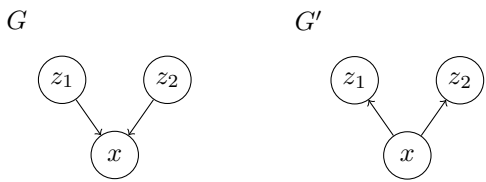

In practice, one often puts further restrictions on the probability distributions the networks can model. For example, the distribution of an individual node can be required to be Gaussian. The mean (and variance) are in this case a function of the values of the parent nodes. We discuss the case of a restricted set of probability distributions in the last part of the results section. \\

The question of finding a sparse $G'$ that can approximate the posterior distribution of the generative model well is also studied from a more practical perspective, using methods from machine learning. One can use a sparsity prior when determining the recognition model, to encourage that only the edges really necessary for modelling the posterior are added. \citet{lowe2022amortized,louizos2017learning,molchanov2019doubly} present several approaches. \\

Markov equivalence is a property of a pair of Bayesian networks that indicates that they encode the same set of conditional independence statements \citep{verma1990equivalence,flesch2007markov}. A generalisation of this, which we will call \emph{Markov inclusion}, is when the set of conditional independence statements encoded in one graph is a subset of the conditional independence statements encoded in the other graph \citep{castelo2003inclusion}. We will see in Proposition \ref{connectVisibleNodes} that the results in this paper can also be viewed as describing under which conditions one Bayesian network is Markov inclusive of another. \\

\citet{webb2018faithful} deal with a closely related problem. They present an algorithm for inverting the generative network that gives a recognition network with the desired properties. While the present article also deals with the algorithmic aspects, it puts more emphasis on the conditions for the recognition network to have the desired properties. 
The authors were unaware of the publication \citep{webb2018faithful} prior to the acceptance of the present paper.

\subsection{Example}
Before giving a formal definition of the problem, we illustrate the topic of this paper by an intuitive example that provides context for the rest of the paper. Consider the generative model for diseases and their symptoms in Figure \ref{fig:generativeModelDiseases}.
\begin{figure}[!ht] 
    \tikzset{tnode/.style={rounded rectangle, draw, minimum width = 2cm}}
    \tikzset{parrow/.style={dotted}}
    \centering
    \begin{tikzpicture}
        \pgfmathsetmacro{\gridspace}{1.5}
        \pgfmathsetmacro{\xspace}{0.9}
        \node (a) {
            \begin{tikzpicture}
                \node [tnode] (xa) at (0,0) {\xa};
                \node [tnode] (za) at (1 * \xspace * \gridspace, 1 * \gridspace) {\za};
                \node [tnode] (xb) at (2 * \xspace * \gridspace, 0) {\xb}; 
                \node [tnode] (zb) at (3 * \xspace * \gridspace, 1 * \gridspace) {\zb};
                \draw [->] (za) to (xa);
                \draw [->] (za) to (xb);
                \draw [->] (zb) to (xb);
            \end{tikzpicture}
        };
        
    \end{tikzpicture}
    \caption{Example from: \cite{koller2009probabilistic} }
    \label{fig:generativeModelDiseases}
\end{figure}
Our goal is to find a model to perform inference on this generative model, i.e.\ to find the posterior distribution $P(\text{diseases}\mid\text{symptoms})$. Naively, one could reverse the edges of the generative model to obtain the recognition model in Figure \ref{naiveRecMod}.

\begin{figure}[!h]
    \tikzset{tnode/.style={rounded rectangle, draw, minimum width = 2cm}}
    \tikzset{parrow/.style={dotted}}
    \centering
    \begin{tikzpicture}
        \pgfmathsetmacro{\gridspace}{1.5}
        \pgfmathsetmacro{\xspace}{0.9}
        \node (a) {
            \begin{tikzpicture}
                \node [tnode] (xa) at (0,0) {\xa};
                \node [tnode] (za) at (1 * \xspace * \gridspace, 1 * \gridspace) {\za};
                \node [tnode] (xb) at (2 * \xspace * \gridspace, 0) {\xb}; 
                \node [tnode] (zb) at (3 * \xspace * \gridspace, 1 * \gridspace) {\zb};
                \draw [<-] (za) to (xa);
                \draw [<-] (za) to (xb);
                \draw [<-] (zb) to (xb);
            \end{tikzpicture}
        };
        
    \end{tikzpicture}
    \caption{Recognition model obtained from reversing the edges of the generative model}
    \label{naiveRecMod}
\end{figure}
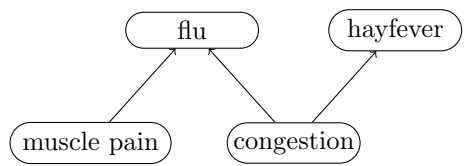
\noindent It is clear that when someone is congested, whether or not they have \xa \ affects the likelihood of that person having hayfever. If someone is congested and also has muscle pain, the congestion is more likely to be caused by the flu. On the other hand, the absence of muscle pain would make it more likely for the congestion to be caused by hayfever. This dependence is however not captured in the graph in Figure \ref{naiveRecMod}, because no information can flow from muscle pain to hayfever. By adding an edge between \xa \ and \zb, or between \za \ and \zb, this dependence can be captured. (Figure \ref{fig4}).\\

\begin{figure}[ht]
    \tikzset{tnode/.style={rounded rectangle, draw, minimum width = 2cm}}
    \tikzset{parrow/.style={dotted}}
    \centering
    \begin{tikzpicture}
        \pgfmathsetmacro{\gridspace}{1.5}
        \pgfmathsetmacro{\xspace}{0.9}
        \node (a) {
            \begin{tikzpicture}
                \node [tnode] (xa) at (0,0) {\xa};
                \node [tnode] (za) at (1 * \xspace * \gridspace, 1 * \gridspace) {\za};
                \node [tnode] (xb) at (2 * \xspace * \gridspace, 0) {\xb}; 
                \node [tnode] (zb) at (3 * \xspace * \gridspace, 1 * \gridspace) {\zb};
                \draw [<-] (za) to (xa);
                \draw [<-] (za) to (xb);
                \draw [<-] (zb) to (xb);
                \draw [->, dotted, red, thick] (xa) to (zb);
                \draw [->, dotted, red, thick] (za) to (zb);
            \end{tikzpicture}
        };
        
    \end{tikzpicture}
    \caption{Recognition model with optional arrows in red to capture the dependence between $\xa$ and $\zb$}
    \label{fig4}
\end{figure}
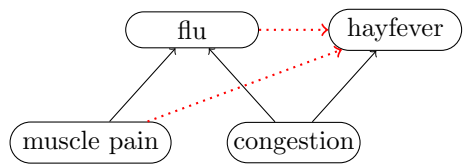

\section{Notation}

\subsection{Graph theory}
For a comprehensive overview of the theory and terminology of probabilistic graphical models, we refer to \citep{lauritzen1996,cowell1999,studeny2005probabilistic}. We define a \emph{graph} $G$ to be a pair $G = (N,E)$ where $N$ is a non-empty finite set of \emph{vertices} or \emph{nodes} and $E \subset N \times N$ such that $(s,s) \notin E$ for all $s \in N$. A graph $H = (A, \tilde{E})$ is called a \emph{subgraph} of $G$ if $A \subset N$ and $\tilde{E} \subset E$ and we write $H \subset G$. For a subset $A\subset N$, the \emph{vertex-induced subgraph} of $G$ is denoted $G[A]$ and is given by $(A, E[A])$, with $E[A] = \{(s,t) \in E : s,t \in A\}$. When both $(s,t)$ and $(t,s)$ are in $E$, we say that there is an \emph{undirected edge} between $s$ and $t$. 
When $(s,t) \in E$ and $(t,s) \notin E$ we say that there is a \emph{directed edge} going from $s$ to $t$ and write $s \to t$. 
We say that a graph is \emph{directed} if all edges are directed.\\ 

In the following we let $G = (N,E)$ be a directed graph. 
We say that two vertices $s,t \in N$ are \emph{joined} if there is an edge between the two. A set of vertices is called \emph{complete} if all pairs of its elements are joined. A \emph{path} in $G$ from $s$ to $t$ is a sequence of distinct nodes $s = u_0, ..., u_n = t$ such that $\left(u_i, u_{i+1}\right) \in E$ for all $i\in \{0 ,..., n-1\}$. 
A \emph{cycle} is a path of length $n > 1$ with the modification that the end points are identical. We say that a graph is \emph{acyclic} if it does not possess any cycles. A directed graph which is acyclic is called a \emph{directed acyclic graph,} or DAG.\\

In the following we let $G = (N,E)$ be a DAG. 
A \emph{trail} from $s$ to $t$ is a sequence of distinct nodes $s = u_0, ..., u_n = t$ such that $\left(u_i, u_{i+1}\right) \in E$ or $(u_{i+1}, u_i) \in E$ or both, for all $i\in \{0 ,..., n-1\}$. Note that movement along a trail could go against the direction of the arrows, in contrast to the case of a path. 
A \emph{loop} is a trail of length $n > 1$ with the modification that the end points are identical.   
If $(s,t) \in E$ we call $s$ a \emph{parent} of $t$ and $t$ a \emph{child} of $s$. The set of parents of a node $t$ is denoted $\pa_G(t)$ and the set of children of a node $s$ is denoted $\ch_G(s)$. 
A node $t \neq s$ is called a \emph{descendant} of $s$ if there exists a path from $s$ to $t$. The set of descendants of a node $s$ is denoted $\des_G(s)$. 
The set of \emph{non-descendants} of $s$ is given by $N \setminus (\{s\} \cup \des_G(s))$ and is denoted by $\nd_G(s)$.   
$G$ is called \textit{perfect} if for all $s$, the set $\pa_G(s)$ is complete. We let $\leaves(G)=\{s \in N : \ch_G(s)=\emptyset \} $ be the set of nodes without children, and $\roots(G) = \{ s \in N : \pa_G(s) = \emptyset \}$ be the set of nodes without parents (see Figure \ref{fig:subsetsN}). 
For $e = (s, t) \in E$, let $e^* = (t,s)$, $E^* = \{e^* : e \in E\}$, $G^* = (N, E^*)$ the graph $G$ with its edges reversed, $E^\sim = E \cup E^*, G^\sim = (N, E^\sim)$ the \emph{skeleton} (i.e.\ undirected version) of $G$, and $E_{\pa(G)}^\sim = \{(t_1,t_2) : \exists s \in N, t_1, t_2 \in \pa_G(s) \}, G^\rM = (N, E^\sim \cup E_{\pa(G)}^\sim)$ the \emph{moral graph} of $G$, which is the skeleton of $G$, with extra (undirected) edges between all parents of every vertex in $G$.  \\

For a trail $\gamma = (u_0, .., u_n)$, we let $\gamma^* = (u_n, ..., u_0)$ be its reversed version, and $(\gamma, s) = (u_0, .., u_n, s)$ for $s \notin \gamma$ such that $(u_n, s) \in E$ or $(s, u_n) \in E$ be its prolongation. Now let $\gamma_1 = (u_0, .., u_n)$ and $\gamma_2 = (v_0, .., v_m)$ be trails such that $u_n = v_0$. We define $\gamma = \gamma_1 ; \gamma_2$ to be the \emph{concatenation} of $\gamma_1$ and $\gamma_2$ such that $\gamma = (u_0, ..., u_{i}, s, v_{j}, ..., v_m)$, with $s$ the first node in $\gamma_1$ which belongs to $\gamma_{1} \cap \gamma_{2}$. Let $\gamma = (u_0, .., u_n)$ be a trail and $u_i$ a node on this trail that is not one of the endpoints, i.e.\ $0 < i < n$. $u_i$ is called a \emph{v-structure} if $u_{i-1} \rightarrow u_i \leftarrow u_{i+1}$. $\gamma$ is said to be \emph{blocked} by $S \subset N$ if $\gamma$ contains a vertex $u$ such that either: (a) $u \in S$ and $u$ is not a v-structure; (b) $u$ and $\des_G(u)$ are not in $S$ and $u$ is a v-structure. Note that when one of the endpoints of the trail is in $S$, the trail is definitely blocked, since endpoints can never be v-structures.
Let $A, B, S \subset N$ (not necessarily disjoint). $A, B$ are said to be \emph{d-separated} by $S$ if all trails from $A$ to $B$ are blocked by $S$ and we write $A \perp_G B \mid S$.
A \emph{topological ordering of $G$} is an injective map $\cO: N \to \{1,..., |N|\}$ that assigns to every node a number such that, if two nodes are joined, the edge points from the lower to the higher numbered node, i.e.\ $(s,t)\in E$ implies $\cO(s)<\cO(t)$. When $s,t \in N$ are such that $\cO(s) < \cO(t)$ we say $s$ is \textit{older} than $t$, and $t$ is \emph{younger} than $s$. Given a topological ordering $\cO$, the set of \emph{predecessors} of a node $s$, denoted $\pr^{\cO}(s)$, is the set of all nodes with a lower topological number\footnote{Note that some authors define the set of predecessors to be the set of nodes with a lower topological number in all possible topological orderings.}, i.e.\ $\pr^{\cO}(s) = \{t \in N : \cO(t) < \cO(s)\}$. Note that, although $\des_G(s)\cap \pr^{\cO}(s)=\emptyset$, the set $\pr^{\cO}(s)$ in general depends on the choice of topological ordering (see Figure \ref{figure:importanceTopologicalOrdering}).

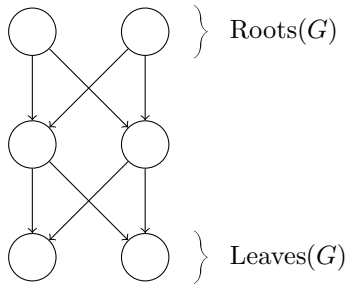
\begin{figure}[ht]
    \centering 
    \tikzset{parrow/.style={dotted}}
    \tikzset{mybrace/.style={decorate, decoration = {calligraphic brace, amplitude=5pt}}}
    \tikzset{mymbrace/.style={decorate, decoration = {calligraphic brace,mirror, amplitude=5pt}}}

    \begin{tikzpicture}
        \pgfmathsetmacro{\gridspace}{1.5}

        \node (a) at (0,0) [neuron] {};
        \node (b) at (1*\gridspace,0) [neuron] {};
        \node (c) at (0,1*\gridspace) [neuron] {};
        \node (d) at (1*\gridspace,1*\gridspace) [neuron] {};
        \node (e) at (0,2*\gridspace) [neuron] {};
        \node (f) at (1*\gridspace,2*\gridspace) [neuron] {};
        \draw[->] (e) to (c);
        \draw[->] (e) to (d);
        \draw[->] (f) to (c);
        \draw[->] (f) to (d);
        \draw[->] (d) to (a);
        \draw[->] (d) to (b);
        \draw[->] (c) to (a);
        \draw[->] (c) to (b);
        
        \pgfmathsetmacro{\hdistance}{.65}
        \pgfmathsetmacro{\vdistance}{.35}
        
        \draw [mymbrace] (\hdistance + \gridspace, 2 * \gridspace - \vdistance)  --  (\hdistance + \gridspace, 2 * \gridspace + \vdistance) node [pos=.5, right=10pt] {$\roots(G)$};
        \draw [mymbrace] (\hdistance + \gridspace, - \vdistance)  --  (\hdistance + \gridspace,  \vdistance) node [pos=.5, right=10pt] {$\leaves(G)$};

     \end{tikzpicture}
     \caption{Different subsets of $N$ for a graph $G$}
     \label{fig:subsetsN}
\end{figure}

\subsection{Probability on graphs}
Consider a DAG $G = (N,E)$. To every node $s \in N$ we associate a measurable space $(\sX_s, \cX_s)$. The state spaces are either real finite-dimensional vector spaces or finite sets and to each measurable space we associate a ($\sigma$-finite) base measure $\mu_s$ which is typically the Lebesgue measure or counting measure respectively. 
For a non-empty\footnote{When $A = \emptyset$ we let $\sX_\emptyset = \{()\} $ be the space containing only the empty sequence and $\cX = \{\emptyset, \sX_\emptyset\}$ be the trivial $\sigma$-algebra. $X_\emptyset$ is simply given by the constant map $X_\emptyset \colon \cX \ni x \mapsto () \in \sX_\emptyset$. See \cite{studeny2005probabilistic} Section 2.1 for details.}  
subset $A \subset N$ we let $\sX_A = \times_{s \in A} \sX_s$ be the Cartesian product of the individual $\sX_s$ and  $\cX_A = \otimes_{s \in A} \cX_s$ be the product $\sigma$-algebra. We write $(\sX, \cX) = (\sX_N, \cX_N)$ and assign to this space the base measure $\mu = \otimes_{s \in N} \mu_s$, which is the product measure. In this paper, we consider probability distributions $P$ over the space $(\sX, \cX)$. For every $s \in N$ we let $X_s\colon \sX \to \sX_s$ be the random variable projecting onto the individual spaces, and similarly $X_A =  (X_s)_{s \in A}$ and $X = X_N$. A typical element of $\sX_s$ is denoted $x_s$ with $x_A = (x_s)_{s\in A}$ and $x = (x_s)_{s\in N}$. We write $P_A$ for the distribution of $X_A$ on $(\sX_A, \cX_A)$, i.e.\ for $\sA \in \cX_A$, 
\begin{align}
    P_A(\sA) \coloneqq P\left(\left(X_A\right)^{-1}(\sA)\right) =  P\left(\sA \times \sX_{N \setminus A}\right).
\end{align}
For $A, C \subset N$, we say that a map $K \colon \cX_A \times \sX_C \to [0,1]$ is a \emph{Markov kernel} if
\begin{align}
    \sA &\mapsto K(\sA, x_C) \ \ &&\text{is a probability measure on $\cX_A$ for all $x_C \in \sX_C$},\\
    x_C &\mapsto K(\sA, x_C) \ \ &&\text{is $\cX_C$-measurable for all $\sA \in \cX_A$}.
\end{align}
Furthermore, we say that $K$ is a \emph{(regular) version of the conditional probability of $A$ given $C$} if it is Markov kernel and for all $\sA \in \cX_A, \sC \in \cX_C$
\begin{align}
    P_{A \cup C}(\sA \times \sC) = \int_\sC K(\sA, x_C) dP_C(x_C)
\end{align}
holds. It can be shown that in our setting, one can always find such a Markov kernel that is unique $P_C$-a.e. \citep{dudley2018real}. We therefore also denote such a Markov kernel by $P_{A \mid C}$. For $A, B, C \subseteq N$ we say that $A$ is \emph{conditionally independent} of $B$ given $C$ and write $A \indep B \mid C$ if for every $\sA \in \cX_A$ and $\sB \in \cX_B$
\begin{align}
    P_{A \cup B \mid C}(\sA \times \sB \mid x)=P_{A \mid C}(\sA \mid x) \cdot P_{B \mid C}(\sB \mid x) \quad P_C \text {-a.e.}.
\end{align}
For $s \in N$, a \emph{kernel function} will be a map $k^s(\cdot| \cdot)\colon \sX_s \times$ $\sX_{\pa_G(s)} \to \bR_{\geq 0}$ such that for all $x_{\pa_G(s)} \in \sX_{\pa_G(s)}$ 
\begin{align}
    \int k^s\left(x_s| x_{\pa_G(s)}\right) d\mu_s\left(x_s\right)=1.
    \end{align} 
A probability distribution $P$ is said to \textit{factorise} over a DAG $G$ if it has a density $p$ w.r.t.\ the product measure $\mu$ and there exist kernel functions $(k^s)_{s \in N}$ such that
\begin{align}
p(x)=\prod_{s \in N} k^s\left(x_s| x_{\pa_G(s)}\right).
\end{align}
We denote the set of probability distributions on $\sX$ that factorise over $G$ by $\cP^G$. Now let $A \subset N$ be such that its parents are themselves in $A$. A Markov kernel $K \colon \cX_{N \setminus A} \times \sX_{A} \to [0,1]$ is said to \textit{factorise} over a DAG $G$ if there exist kernel functions $(k^s)_{s\in N \setminus A}$ such that for every $x_{A} \in \sX_{A}$, $K\left((\cdot), x_{A}\right)$ has a density $p\left( (\cdot) | x_{A}\right)$, such that 
\begin{align}
    p\left(x_{N \setminus A} | x_{A}\right) = \prod_{s \in N \setminus A} k^s\left(x_s | x_{\pa_G(s)}\right).
\end{align}
We denote the set of such Markov kernels by $\cK^G$.

\section{Problem statement}

\begin{bff}[Goal I]
    Given a DAG $G = (N,E)$, find a DAG $G' = (N, E')$ such that $\roots(G') \supset \leaves(G)$ and for every $P \in \cP^G$, there exists a $K \in \cK^{G'}$ that is a version of the conditional distribution $P_{N\setminus \leaves(G)|\leaves(G)}$. 
\end{bff}
It turns out (Proposition \ref{connectVisibleNodes}) that this goal is equivalent (up to edges in $G'$ between nodes in $\leaves(G)$) to the following goal:

\begin{bff}[Goal II]
    Given a DAG $G = (N,E)$, find a DAG $G' = (N, E')$ such that $\cP^{G'} \supset \cP^G$ and the parents in $G'$ of the nodes in $\leaves(G)$ are themselves in $\leaves(G)$.
\end{bff}
Note that even though the parents in $G'$ of the nodes in $\leaves(G)$ are themselves in $\leaves(G)$, it is not necessarily the case that $\leaves(G)$ are oldest in every topological ordering of $G'$. See Figure \ref{figure:importanceTopologicalOrdering} for an example. However, there exists a topological ordering of $G'$ for which the nodes in $\leaves(G)$ precede the other nodes if and only if the parents in $G'$ of the nodes in $\leaves(G)$ are themselves in $\leaves(G)$.\\

In the remainder of the paper, we will focus on Goal II. Moreover, we sometimes impose the following extra condition:
\begin{equation} \label{conditionContainFlippedEdges}
    G' \supset G^*. 
\end{equation}
It can be argued that this is a natural condition since it enforces that the hierarchical structure of the generative model $G$ is preserved when finding a suitable $G'$.  Furthermore, note that $G^*$ satisfies the requirement that the parents in $G^*$ of $\leaves(G)$ are themselves in $\leaves(G)$. 

\begin{figure}[ht]
    \centering
    \begin{tikzpicture}
        \node (x) {
            \begin{tikzpicture}
                \node (a) at (0,0) [neuron] {};
                \node (b) at (-.7,1) [neuron] {};
                \node (c) at (.7,1) [neuron] {};
                \node (d) at (1.4, 0) [neuron] {};
                \node (g) at (-1.3, 1.8) {$G$};
                \draw[->] (b) to (a);
                \draw[->] (c) to (a);
                \draw[->] (c) to (d);
            \end{tikzpicture}
        };
        \node (y) [right=of x] {
            \begin{tikzpicture}
                \node (a) at (0,0) [neuron] {1};
                \node (b) at (-.7,1) [neuron] {3};
                \node (c) at (.7,1) [neuron] {4};
                \node (d) at (1.4, 0) [neuron] {2};
                \node (g) at (-1.3, 1.8) {$G'$};
                \draw[farrow] (b) to (a);
                \draw[farrow] (c) to (a);
                \draw[farrow] (c) to (d);
                \draw[->] (a) to (d);
                \draw[->] (b) to (c);
            \end{tikzpicture}
        };
        \node (z) [right=of y] {
            \begin{tikzpicture}
                \node (a) at (0,0) [neuron] {1};
                \node (b) at (-.7,1) [neuron] {2};
                \node (c) at (.7,1) [neuron] {4};
                \node (d) at (1.4, 0) [neuron] {3};
                \node (g) at (-1.3, 1.8) {$G'$};
                \draw[farrow] (b) to (a);
                \draw[farrow] (c) to (a);
                \draw[farrow] (c) to (d);
                \draw[->] (a) to (d);
                \draw[->] (b) to (c);
            \end{tikzpicture}
        };
    \end{tikzpicture}
    \caption{Pair of DAGs $G, G'$ that satisfy the second requirement of Goal II, but for which there exists a topological ordering of $G'$ (the one on the right) that does not reflect this.}
    \label{figure:importanceTopologicalOrdering}
    \end{figure}
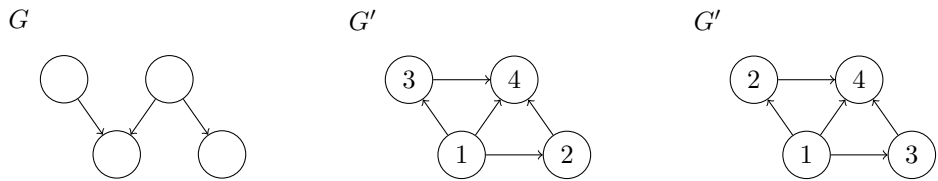

\section{Preliminaries}

\begin{lemma} \label{inclusionedgeset}
    Let $G=(N,E_G), H=(N,E_H)$ be DAGs such that $E_G \subset E_H$. Then $\cP^{G} \subset \cP^H$.
\end{lemma}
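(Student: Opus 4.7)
The plan is straightforward: unfold the definition of factorization over $G$ and show that the same density admits a factorization over $H$ by padding each kernel function with dummy arguments corresponding to the extra parents. I would proceed as follows.

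First, I would fix $P \in \cP^G$ and use the definition to obtain a density $p$ w.r.t.\ $\mu$ and kernel functions $(k^s)_{s \in N}$ such that $p(x) = \prod_{s \in N} k^s(x_s \mid x_{\pa_G(s)})$. The key structural observation is that $E_G \subset E_H$ implies $\pa_G(s) \subset \pa_H(s)$ for every $s \in N$. For each $s$, I would then define a candidate kernel function over $H$ by
\begin{align}
\tilde k^s(x_s \mid x_{\pa_H(s)}) \;:=\; k^s(x_s \mid x_{\pa_G(s)}),
\end{align}
i.e.\ simply discard the coordinates in $\pa_H(s) \setminus \pa_G(s)$.

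Next, I would verify that $\tilde k^s$ is a valid kernel function in the sense defined in the excerpt. Measurability in $x_{\pa_H(s)}$ follows because $\tilde k^s$ is the composition of the measurable map $k^s(x_s \mid \cdot)$ with the measurable projection $\sX_{\pa_H(s)} \to \sX_{\pa_G(s)}$, and the normalization condition $\int \tilde k^s(x_s \mid x_{\pa_H(s)})\,d\mu_s(x_s) = 1$ holds for every $x_{\pa_H(s)}$ because it reduces to the corresponding integral identity for $k^s$. With this in hand, for every $x \in \sX$,
\begin{align}
\prod_{s \in N} \tilde k^s(x_s \mid x_{\pa_H(s)}) \;=\; \prod_{s \in N} k^s(x_s \mid x_{\pa_G(s)}) \;=\; p(x),
\end{align}
so $p$ factorizes over $H$ via $(\tilde k^s)_{s \in N}$, hence $P \in \cP^H$.

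There is no real obstacle here; the statement is essentially an unpacking of definitions. The only two points to be careful about are (i) confirming that $\pa_G(s) \subset \pa_H(s)$ genuinely follows from $E_G \subset E_H$ (which is immediate from the definition of parents) and (ii) checking measurability of $\tilde k^s$, which amounts to noting that projections between product measurable spaces are measurable. The edge case $\pa_G(s) = \emptyset$, handled by the convention in the footnote of the excerpt, requires no special treatment since in that case $k^s$ is already a probability density on $\sX_s$ and the padding is vacuous.
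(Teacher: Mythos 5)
Your proof is correct and follows the same route as the paper's one-line argument: observe $\pa_G(s)\subset\pa_H(s)$ and read each kernel $k^s(x_s\mid x_{\pa_G(s)})$ as a kernel with argument $x_{\pa_H(s)}$ that ignores the extra coordinates. You merely spell out the measurability and normalization checks that the paper leaves implicit.
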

\begin{proof}
    Since $\pa_G(s) \subset \pa_H(s)$ for every node $s$, a density that can be written as $\prod_s k^s\left(x_s | x_{\pa_G(s)}\right)$ can also be written as $\prod_s k^s\left(x_s | x_{\pa_H(s)}\right)$.
\end{proof}

\begin{lemma}\label{5.14}
    (Theorem 5.14 in \citet{cowell1999}) Let $G$ be a DAG with a topological ordering $\cO$. For a probability distribution $P$ on $\sX$, the following conditions are equivalent:
    \begin{itemize}
        \item[(i)] $P \in \cP^G$,
        \item[(ii)] for all sets $A, B, S \subset N$ such that $A \perp_G B \mid S$ we have $A \indep B \mid S$ w.r.t.\ $P$,
        \item[(iii)] for all $s$ we have $s \indep \nd_G(s) \mid \pa_G(s)$ w.r.t.\  $P$,
        \item[(iv)] for all $s$ we have $s \indep \pr^{\cO}(s) \mid \pa_G(s)$ w.r.t.\  $P$.
    \end{itemize}
\end{lemma}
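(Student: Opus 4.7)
The plan is to prove the cyclic chain (i) $\Rightarrow$ (ii) $\Rightarrow$ (iii) $\Rightarrow$ (iv) $\Rightarrow$ (i). Three of the four links are routine; the substantive content is in (i) $\Rightarrow$ (ii), the global Markov property.

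\textbf{Routine implications.} For (iv) $\Rightarrow$ (i), I would apply the chain rule of probability with respect to $\cO$, writing $p(x) = \prod_s p(x_s \mid x_{\pr^{\cO}(s)})$, and then use (iv) together with $\pa_G(s) \subseteq \pr^{\cO}(s)$ to replace each conditional by $p(x_s \mid x_{\pa_G(s)})$, producing the factorisation defining $\cP^G$. For (iii) $\Rightarrow$ (iv), the key observation is that $\pr^{\cO}(s) \subseteq \nd_G(s)$ since a topological ordering forbids descendants of $s$ from preceding $s$; the conclusion then follows by the decomposition axiom of conditional independence. For (ii) $\Rightarrow$ (iii) I would verify the d-separation statement $\{s\} \perp_G \nd_G(s) \mid \pa_G(s)$: along any trail from $s$ to $t \in \nd_G(s)$, the first edge either points into $s$ from a parent, in which case that parent lies in $\pa_G(s)$ and is a non-collider on the trail, blocking it; or the first edge points out of $s$ to a child, in which case the maximal initial directed prefix $s \to u_1 \to \cdots \to u_k$ must terminate strictly before $t$ (otherwise $t \in \des_G(s)$), so $u_{k-1} \to u_k \leftarrow u_{k+1}$ is a collider with $u_k \in \des_G(s)$ and $\des_G(u_k) \subseteq \des_G(s)$, all disjoint from $\pa_G(s)$, so the collider blocks the trail.

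\textbf{The hard implication (i) $\Rightarrow$ (ii).} Here I would follow the classical moralisation argument. Given disjoint $A$, $B$, $S$ with $A \perp_G B \mid S$, I would restrict to the subgraph $H$ induced by the \emph{ancestral closure} of $A \cup B \cup S$ (the set itself together with all ancestors of its members). Three facts then combine: (a) integrating out the nodes outside the ancestral closure, taken in reverse topological order so that each factor integrates to one, shows that the marginal of $P$ on $\sX_{\an(A \cup B \cup S)}$ still factorises over $H$; (b) each kernel factor $k^s(x_s \mid x_{\pa_H(s)})$ depends only on the set $\{s\} \cup \pa_H(s)$, which is a clique in the moral graph $H^\rM$, so the density is a product of clique potentials on $H^\rM$; (c) d-separation of $A$ and $B$ by $S$ in $G$ is equivalent to classical graph separation of $A$ and $B$ by $S$ in $H^\rM$. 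With (a)--(c) in hand, the clique potentials can be partitioned into those whose support lies on the $A$-side of the separator and those on the $B$-side, and the desired $A \indep B \mid S$ follows.

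\textbf{Expected main obstacle.} The subtlest step is (c), the purely combinatorial equivalence between d-separation in the DAG $G$ and ordinary separation in the moral graph $H^\rM$ of the ancestral closure. The bookkeeping around colliders, non-colliders, and the descendants of colliders, together with the fact that moralisation adds edges only between co-parents of a common child, requires a careful case analysis on the type of blocking node along each trail; one direction uses the existence of a trail in $G$ to build an active path in $H^\rM$, while the other converts an unblocked trail in $H^\rM$ (possibly using a moral edge) into an active trail in $G$ by expanding the moral edge through the corresponding collider. A secondary, measure-theoretic subtlety is that (a) and (b) must be phrased using Markov kernels rather than densities when $\sX$ is not discrete, so that all equalities are interpreted $\mu$-almost everywhere, in line with the definition of $\cP^G$.
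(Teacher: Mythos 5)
The paper does not prove this lemma itself; it imports it verbatim as Theorem 5.14 of Cowell et al.\ (1999), whose proof is exactly the classical argument you sketch: chain-rule telescoping along $\cO$ for (iv)$\Rightarrow$(i), the inclusions $\pa_G(s) \subseteq \pr^{\cO}(s) \subseteq \nd_G(s)$ and the d-separation check for the two easy links, and moralisation of the ancestral closure for (i)$\Rightarrow$(ii). Your outline is correct and matches the cited route; the one caveat worth recording is that (iv)$\Rightarrow$(i) requires the standing hypothesis that $P$ admits a density with respect to the product measure $\mu$ (present in the cited source, silently omitted in the paper's restatement), since the conditional-independence conditions alone do not produce such a density, as a pair of independent point masses on $\bR^2$ already shows.
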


\begin{corollary}
    Let $\cO, \tilde{\cO}$ be two topological orderings of $G$. If $P$ satisfies property (iv) of Lemma \ref{5.14} w.r.t.\ $\cO$, then the same is true for $\tilde{\cO}$. 
\end{corollary}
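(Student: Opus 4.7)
The plan is simply to route through condition (i) of \Cref{5.14}, which has no dependence on any topological ordering. Concretely, because $\cO$ is a topological ordering of $G$, \Cref{5.14} applies and yields (iv) $\Rightarrow$ (i); hence the assumption that $P$ satisfies (iv) with respect to $\cO$ gives $P \in \cP^G$. Now $\tilde{\cO}$ is by hypothesis also a topological ordering of the same DAG $G$, so \Cref{5.14} applies again with $\tilde{\cO}$ in place of $\cO$, and the direction (i) $\Rightarrow$ (iv) produces $s \indep \pr^{\tilde{\cO}}(s) \mid \pa_G(s)$ for every $s$, which is exactly property (iv) with respect to $\tilde{\cO}$.

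The only thing worth being careful about is that \Cref{5.14} is stated for a single fixed topological ordering, so one has to observe that the equivalence of (i)--(iv) really is an equivalence of four conditions, two of which ((i) and (iii)) make no reference to an ordering at all. This means one is free to invoke the lemma with $\cO$ and with $\tilde{\cO}$ separately and link the two invocations through the ordering-free statement $P \in \cP^G$. There is no genuine obstacle here: the corollary is essentially a reformulation of the observation that condition (iv) is an ordering-dependent characterisation of an ordering-independent property.
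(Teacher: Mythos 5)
Your proposal is correct and follows essentially the same route as the paper's own proof: both pass from property (iv) with respect to $\cO$ to the ordering-independent condition (i), namely $P \in \cP^G$, and then back down to property (iv) with respect to $\tilde{\cO}$. The observation that conditions (i)--(iii) of Lemma \ref{5.14} make no reference to the ordering is exactly the point the paper makes as well.
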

\begin{proof}
    Note that $(i) - (iii)$ of Lemma \ref{5.14} are independent of the topological ordering. Therefore we have the following implications: for all $s$ we have $s \indep \pr^{\cO}(s) \mid \pa_G(s)$ w.r.t.\  $P$ $\implies$ $P \in \cP^G$ (with topological ordering $\cO$) $\implies$ $P \in \cP^G$ (with topological ordering $\tilde{\cO}$) $\implies$ for all $s$ we have $s \indep \pr^{\tilde{\cO}}(s) \mid \pa_G(s)$ w.r.t.\ $P$. 
\end{proof}

\section{Results}

\subsection{Equivalence of two goals}
\begin{proposition} \label{connectVisibleNodes}
    Let $G$ be a DAG. The following statements hold:
    \begin{itemize}
        \item[(a)] Let $G'=(N, E')$ be a DAG that satisfies Goal I and let  $\cO'$ be a topological ordering of $G'$. Then $\tilde{G}' = \left(N, E'\cup E_{\leaves}\right)$, with $E_{{\leaves}} = \{(s,t): s,t \in \leaves(G), \cO'(s) < \cO'(t)\}$  satisfies Goal II. 
        \item[(b)] Let $G' = (N, E')$ be a DAG that satisfies Goal II. Then $\tilde{G}' = \left(N, E' \setminus E_{\leaves}\right)$, with $E_{\leaves} = \{(s,t) \in E': s,t \in \leaves(G)\}$ satisfies Goal I. 
    \end{itemize}
\end{proposition}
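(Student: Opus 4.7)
Fix $L = \leaves(G)$ and $N_0 = N \setminus L$ throughout. In both directions the plan is to translate between the joint factorization required by Goal II and the factorization of the conditional on $L$ required by Goal I, with $E_{\leaves}$ absorbing exactly the factors that describe the marginal distribution on $L$.

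For (a), I would first observe that $\tilde G'$ is a DAG because every edge added in $E_{\leaves}$ goes from a smaller to a larger $\cO'$-value, so $\cO'$ remains a topological ordering. The parent condition of Goal II is immediate: by Goal I the nodes of $L$ are roots of $G'$, and $E_{\leaves}$ only adds parents that lie in $L$. To prove $\cP^{\tilde G'} \supset \cP^G$, fix $P \in \cP^G$. Goal I supplies $K \in \cK^{G'}$ that is a version of $P_{N_0 \mid L}$, hence
\begin{align*}
p_{N_0 \mid L}(x_{N_0} \mid x_L) = \prod_{s \in N_0} k^s(x_s \mid x_{\pa_{G'}(s)}).
\end{align*}
Independently, iterating the elementary chain rule along $\cO'$ restricted to $L$ yields kernel functions $\ell^s$ with
\begin{align*}
p_L(x_L) = \prod_{s \in L} \ell^s(x_s \mid x_{\pr^{\cO'}(s) \cap L}),
\end{align*}
and by construction of $E_{\leaves}$ we have $\pr^{\cO'}(s) \cap L = \pa_{\tilde G'}(s)$ for every $s \in L$. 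Multiplying the two expressions and using $\pa_{G'}(s) = \pa_{\tilde G'}(s)$ for $s \in N_0$ exhibits $p$ as a density factorizing over $\tilde G'$, so $P \in \cP^{\tilde G'}$.

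For (b), $\tilde G'$ is a DAG as a subgraph of one. The root condition of Goal I follows because Goal II forces every edge into $L$ in $G'$ to go between leaves, i.e.\ to lie in $E_{\leaves}$, which we delete. For the kernel condition, let $P \in \cP^G \subset \cP^{G'}$; factorizing over $G'$ and using $\pa_{G'}(L) \subset L$ gives
\begin{align*}
p(x) = \Bigl(\prod_{s \in L} k^s(x_s \mid x_{\pa_{G'}(s)})\Bigr)\Bigl(\prod_{s \in N_0} k^s(x_s \mid x_{\pa_{G'}(s)})\Bigr),
\end{align*}
where the first factor depends only on $x_L$. I would then marginalize $x_{N_0}$ out in reverse topological order inside $N_0$: the latest remaining $s \in N_0$ has no children in $N_0$, so $x_s$ appears only in its own factor $k^s$, which integrates to $1$ against $\mu_s$ irrespective of the parent values; iterating collapses the second factor to $1$ and identifies the first factor as $p_L(x_L)$. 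Dividing produces $p_{N_0 \mid L}(x_{N_0} \mid x_L) = \prod_{s \in N_0} k^s(x_s \mid x_{\pa_{\tilde G'}(s)})$, which is the density of the desired member of $\cK^{\tilde G'}$.

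The main obstacle I anticipate is making the reverse-topological marginalization in (b) fully rigorous in the abstract measure-theoretic setting: one must verify that the partially integrated products remain measurable and invoke Fubini inductively, and this step relies essentially on $\pa_{G'}(L) \subset L$ so that the first factor is not touched by the integration. Everything else is a bookkeeping exercise in how the parent set of each individual node changes when $E_{\leaves}$ is added or removed.
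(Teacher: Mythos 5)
Your proposal is correct and follows essentially the same route as the paper, which proves the proposition via an auxiliary lemma translating between the joint factorization over $\tilde G'$ and the factorization of the conditional $P_{N\setminus \leaves(G)\mid\leaves(G)}$, with the complete ordered subgraph on $\leaves(G)$ absorbing exactly the marginal $p_{\leaves(G)}$. The only difference is that you spell out the reverse-topological marginalization justifying $\prod_{s\in \leaves(G)}k^s\left(x_s\mid x_{\pa_{G'}(s)}\right)=p\left(x_{\leaves(G)}\right)$, a step the paper asserts directly from the fact that the parents of leaf nodes lie among the leaves.
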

The proof of this proposition is based on the following lemma, in which $G$ plays the role of $G'$ in the proposition. 
\begin{lemma} 
    Let $P$ be a distribution on $\sX$ that has a density $p$ w.r.t.\ $\mu$.
    \begin{itemize}
        \item[(a)] Let $G=(N,E)$ be a DAG with topological ordering $\cO$, $A \subset \roots(G)$, and $H = \left(N, E \cup E_{A}\right)$, with $E_{A} = \{(s,t): s,t \in A, \cO(s) < \cO(t)\}$. If there exists a Markov kernel $K \in \cK^{G}$ that is a version of the conditional distribution $P_{N\setminus A | A}$, then $P \in \cP^{H}$.
        \item[(b)] Let $G = (N, E)$  be a DAG, $A \subset N$ such that the parents of nodes in $A$ are themselves in $A$, and $H = \left(N, E \setminus E_{A}\right), E_{A} = \{(s,t) \in E: s,t \in A \}$. If $P \in \cP^{G}$, then there exists a Markov kernel $K \in \cK^{H}$ that is a version of the conditional distribution $P_{N\setminus A | A}$.
    \end{itemize}
    
\end{lemma}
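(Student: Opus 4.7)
The overall plan is that both parts of the lemma hinge on the factorization $p(x) = p_A(x_A)\,p(x_{N\setminus A}\mid x_A)$, with each part exploiting the special structure of $A$ (either being roots in $G$, or being closed under parents in $G$) to move between a joint factorization and a conditional factorization.

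For part (a), I observe that in $H$ the parent sets are $\pa_H(s) = \pa_G(s)$ for $s \in N\setminus A$ (since the added edges in $E_A$ only go between nodes of $A$), while $\pa_H(s) = \{t \in A : \cO(t) < \cO(s)\}$ for $s \in A$. By assumption, $K \in \cK^G$ is a version of $P_{N\setminus A\mid A}$, so there exist kernel functions $(k^s)_{s \in N\setminus A}$ with
\begin{equation*}
p(x_{N\setminus A}\mid x_A) = \prod_{s \in N\setminus A} k^s\bigl(x_s \mid x_{\pa_G(s)}\bigr) \qquad P_A\text{-a.e.}
\end{equation*}
Applying the chain rule of probability in the order $\cO$ to the nodes of $A$, I set $k^s(x_s \mid x_{\pa_H(s)}) \coloneqq p(x_s \mid x_{\pa_H(s)})$ for $s \in A$; these are valid kernel functions because conditional densities integrate to $1$. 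Multiplying all factors yields $p(x) = \prod_{s \in N} k^s(x_s \mid x_{\pa_H(s)})$, so $P \in \cP^H$.

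For part (b), the hypothesis $P \in \cP^G$ gives kernel functions $(k^s)_{s \in N}$ with $p(x) = \prod_{s\in N} k^s(x_s \mid x_{\pa_G(s)})$. Because parents of nodes in $A$ lie in $A$, I split the product into an $A$-part (depending only on $x_A$) and an $(N\setminus A)$-part. I then compute $p_A(x_A)$ by integrating out $x_{N\setminus A}$ in reverse topological order relative to $\cO$ restricted to $N\setminus A$: the innermost kernel involves the youngest remaining node $s$, integrates to $1$ by the kernel-function property, and the remaining factors do not depend on $x_s$. Iterating peels off the entire $(N\setminus A)$-product, leaving
\begin{equation*}
p_A(x_A) = \prod_{s \in A} k^s\bigl(x_s \mid x_{\pa_G(s)}\bigr),
\end{equation*}
and dividing gives $p(x_{N\setminus A}\mid x_A) = \prod_{s \in N\setminus A} k^s(x_s \mid x_{\pa_G(s)})$. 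Since the removed edges $E_A$ lie inside $A$, the parent sets of nodes in $N\setminus A$ coincide in $G$ and $H$, so this expression witnesses a Markov kernel in $\cK^H$ that serves as a version of $P_{N\setminus A\mid A}$.

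The main obstacle I anticipate is bookkeeping rather than any deep difficulty: I need to argue carefully that the $P_A$-a.e.\ equality in (a) is compatible with the factorization claim for $P$ (so that the product formula genuinely defines $p$), and I need to set up the iterative integration in (b) so that at each step the kernel being integrated is the \emph{unique} factor depending on the youngest remaining variable---this is exactly where the assumption that $\pa_G(s) \subset A$ for $s \in A$ is used. Once these measure-theoretic details are stated cleanly, both implications reduce to algebraic manipulation of the product.
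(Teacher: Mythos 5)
Your proof is correct and follows essentially the same route as the paper's: both parts rest on the decomposition $p(x) = p\left(x_A\right)p\left(x_{N\setminus A}\mid x_A\right)$, with the chain rule over the completely joined set $A$ handling part (a) and the splitting of the product over $N$ into its $A$-part and $(N\setminus A)$-part handling part (b). The only difference is one of explicitness: you spell out the reverse-topological-order integration establishing $p\left(x_A\right) = \prod_{s\in A} k^s\left(x_s\mid x_{\pa_G(s)}\right)$, a step the paper asserts directly from $\pa_G(s)\subset A$ for $s\in A$.
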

\begin{proof}
    (a) Suppose $K$ is a version of $P_{N\setminus A|A}$ and $K \in \cK^G$. We need to show $P \in \cP^{H}$. We can write $p$ as follows:
    \begin{align}
        p(x) = p\left(x_{N\setminus A} | x_{A}\right) p\left(x_{A}\right),
    \end{align}
    where $p\left(x_{N\setminus A} | x_{A}\right)$ is the density corresponding to $K$ \citep{dudley2018real}. From the fact that $K \in \cK^G$ we know
    \begin{align}
        p\left(x_{N\setminus A} | x_{A}\right) = \prod_{s \in N\setminus A} k^s\left(x_s | x_{\pa_G(s)}\right).
    \end{align}
    Since all the nodes in $A$ are joined in $H$ we have
    \begin{align}
        p\left(x_{A}\right)  = \prod_{s \in A} k^s\left(x_s | x_{\pa_H(s)}\right).
    \end{align}
    Combining the above gives
    \begin{align}
        p(x) &= \prod_{s \in N\setminus A} k^s\left(x_s | x_{\pa_G(s)}\right) \prod_{s \in A} k^s\left(x_s | x_{\pa_H(s)}\right) \\
        &= \prod_{s \in N} k^s\left(x_s | x_{\pa_H(s)}\right), 
    \end{align}
    and therefore $P \in \cP^{H}$.    
    
    (b) Now suppose $P \in \cP^{G}$ and $x \in \sX$ such that $p\left(x_A\right) > 0$. We can write
    \begin{align}
        p(x) &= \prod_{s \in N} k^s\left(x_s | x_{\pa_G(s)}\right) \\
        &= \prod_{s \in N\setminus A} k^s\left(x_s | x_{\pa_G(s)}\right) \prod_{s \in A} k^s\left(x_s | x_{\pa_G(s)}\right) \\
        &= \prod_{s \in N\setminus A} k^s\left(x_s | x_{{\pa_{H}}(s)}\right) \prod_{s \in A} k^s\left(x_s | x_{\pa_G(s)}\right), 
    \end{align}
    where we can switch from $\pa_G$ to $\pa_{H}$ in the third equality because there are only edges removed between nodes in $A$ to obtain $H$. From the  definition of $A$ it follows that $\pa_G(s)\subset A$ for all $s \in A$, hence, $\prod_{s \in A} k^s\left(x_s | x_{\pa_G(s)}\right) = p\left(x_{A}\right)$. Dividing by $p\left(x_{A}\right)$ on both sides gives:
    \begin{align} \label{eq:conditionalFactorises}
        p\left(x_{ N\setminus A}|x_{A}\right) &= \prod_{s \in N\setminus A} k^s\left(x_s | x_{{\pa}_{H}(s)}\right). 
    \end{align}
    We know that there exists a Markov kernel $K$ that is a version of the conditional distribution of $N\setminus A$ given $A$ and that this kernel has density $p\left(x_{ N\setminus A}|x_{A}\right)$ \citep{dudley2018real}. Equation \eqref{eq:conditionalFactorises} shows that the density factorises and therefore $K \in \cK^H$. 
\end{proof}

\subsection{Conditions in terms of d-separation}
Necessary and sufficient conditions for our goal can be deduced from the following theorem:
\begin{theorem} \label{thmDSep}
    Let $G = (N, E), G'=(N, E')$ be DAGs, with $\cO'$ a topological ordering for $G'$. The following statements are equivalent:
    \begin{itemize}
        \item[(i)] $\cP^{G'} \supset \cP^G$,
        \item[(ii)] For all sets $A, B, S \subset N$ such that $A \perp_{G'} B \mid S$, we have $A \perp_G B \mid S$,
        \item[(iii)] For all $s \in N$, we have $s \perp_G \nd_{G'}(s) \mid \pa_{G'}(s)$,
        \item[(iv)] For all $s \in N$, we have $s \perp_G \pr^{\cO'}(s) \mid \pa_{G'}(s)$.
    \end{itemize}
\end{theorem}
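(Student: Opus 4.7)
The plan is to prove the equivalences through the cycle (ii) $\Rightarrow$ (iii) $\Rightarrow$ (iv) $\Rightarrow$ (i) $\Rightarrow$ (ii). The first three implications are essentially mechanical and parallel the structure of Lemma \ref{5.14}; only the last step, which must translate the purely probabilistic inclusion in (i) into a combinatorial d-separation statement about $G$, carries real content.

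For (ii) $\Rightarrow$ (iii), I would invoke the standard fact that the local directed Markov property holds at the level of d-separation in any DAG: in $G'$ one has $\{s\} \perp_{G'} \nd_{G'}(s) \mid \pa_{G'}(s)$ for every node $s$. Condition (ii) then transports this d-separation to $G$, giving (iii). For (iii) $\Rightarrow$ (iv), the inclusion $\pr^{\cO'}(s) \subseteq \nd_{G'}(s)$ (no descendant of $s$ can precede $s$ in a topological ordering) makes every trail from $s$ to $\pr^{\cO'}(s)$ a trail from $s$ to $\nd_{G'}(s)$, so the d-separation passes to the smaller target set. For (iv) $\Rightarrow$ (i), take any $P \in \cP^G$: applying Lemma \ref{5.14} to $G$ in the direction (i) $\Rightarrow$ (ii) turns each d-separation of $G$ into a conditional independence w.r.t.\ $P$, so (iv) yields $s \indep \pr^{\cO'}(s) \mid \pa_{G'}(s)$ w.r.t.\ $P$ for every $s$. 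Then Lemma \ref{5.14} applied to $G'$, with ordering $\cO'$, in the direction (iv) $\Rightarrow$ (i) gives $P \in \cP^{G'}$, proving $\cP^G \subseteq \cP^{G'}$.

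The principal obstacle is (i) $\Rightarrow$ (ii), because it has to extract a combinatorial conclusion about $G$ from a purely probabilistic hypothesis. My plan is to appeal to the existence of a distribution $P^* \in \cP^G$ that is \emph{faithful} to $G$, in the sense that $A \indep B \mid S$ w.r.t.\ $P^*$ holds if and only if $A \perp_G B \mid S$. Granted such a $P^*$, any d-separation $A \perp_{G'} B \mid S$ forces $P^* \in \cP^{G'}$ via (i), Lemma \ref{5.14} then supplies $A \indep B \mid S$ w.r.t.\ $P^*$, and faithfulness converts this back to $A \perp_G B \mid S$. The real difficulty is thus concentrated in the existence of a faithful distribution---a well-known but nontrivial fact about DAG models that sits outside the preliminaries developed here.
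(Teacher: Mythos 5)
Your proof is correct, and in substance it follows the same two-track strategy as the paper: the directions that go from a d-separation hypothesis to the set inclusion are handled by shuttling through Lemma \ref{5.14}, while the hard direction (i) $\Rightarrow$ (ii) is handled by exhibiting a distribution in $\cP^G$ that violates the conditional independence corresponding to a non-d-separation in $G$. Two differences are worth noting. First, organizationally, you close a single cycle (ii) $\Rightarrow$ (iii) $\Rightarrow$ (iv) $\Rightarrow$ (i) $\Rightarrow$ (ii), whereas the paper proves (i) $\Leftrightarrow$ (ii) directly and waves at (iii) and (iv) as ``similar''; your cycle is arguably tidier, and your steps (ii) $\Rightarrow$ (iii) (local Markov property of d-separation in $G'$) and (iii) $\Rightarrow$ (iv) (monotonicity under shrinking the target set, since $\pr^{\cO'}(s) \subseteq \nd_{G'}(s)$) are exactly the right graph-theoretic observations. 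Second, and more substantively, for (i) $\Rightarrow$ (ii) you invoke the existence of a distribution \emph{faithful} to $G$, i.e.\ a single $P^*$ witnessing every non-d-separation simultaneously. The paper instead cites the weaker \emph{per-triple completeness} of d-separation (equation (5.7) in \citet{cowell1999}): for each fixed triple $(A,B,S)$ with $A \not\perp_G B \mid S$ there is \emph{some} $P \in \cP^G$, allowed to depend on the triple, with $A \not\indep B \mid S$. The weaker statement is all that the contrapositive argument needs, since one argues one triple at a time, and it is considerably easier to establish than global faithfulness (which requires a measure-zero or intersection argument over all triples at once). So your proof is valid but outsources more than necessary to an external result; replacing faithfulness by per-triple completeness would make the dependency lighter without changing anything else in your argument.
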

\begin{proof}
    $(i) \implies (ii)$ (by contradiction) Suppose there exist $A, B, S$ such that $A \perp_{G'} B \mid S$, but $A \not\perp_G B \mid S$. From equation (5.7) in \citet{cowell1999} we know that there exists a $P \in \cP^G$ for which $A \not\indep B \mid S$. This violates (ii) of Lemma \ref{5.14} and therefore $P \notin \cP^{G'}$. \\
    $(ii) \implies (i)$ Let $P \in \cP^G$. We need to show $P \in \cP^{G'}$. From $(i) \implies  (ii)$ in Lemma \ref{5.14} we know that $A \perp_G B \mid S \implies A \indep B \mid S$ for $P$. Combining this with the assumption $A \perp_{G'} B \mid S \implies A \perp_G B \mid S$ gives $A \perp_{G'} B \mid S \implies A \indep B \mid S$. This means that $P$ satisfies (ii) of Lemma \ref{5.14} w.r.t.\ $G'$ and therefore $P \in \cP^{G'}$.\\
    $(i) \iff (iii)$ and $(i) \iff (iv)$ can be shown in a similar way. 
\end{proof}

\subsection{Conditions in terms of perfectness}

A sufficient condition for our goal can be deduced from the following theorem:

\begin{theorem}\label{sufficientCondition}
    Let $G = (N,E), G'=(N,E')$ be two DAGs. If $G'$ contains a subgraph $H$ such that $H$ is perfect and its undirected version $H^{\sim}$ contains the moral graph $G^{\rM}$, then $\cP^{G'} \supset \cP^{G}$.
\end{theorem}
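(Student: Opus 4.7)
The plan is to reduce, via Lemma \ref{inclusionedgeset} applied to $H \subset G'$ (which gives $\cP^H \subset \cP^{G'}$), to the single claim $\cP^G \subset \cP^H$. Fix $P \in \cP^G$ with density $p(x) = \prod_{s \in N} k^s(x_s \mid x_{\pa_G(s)})$ and choose a topological ordering $\cO$ of $H$. The pivotal structural observation I would work from is that, for each $s$, the factor $k^s$ is supported on $C_s := \{s\} \cup \pa_G(s)$, which is complete in $G^\rM$ by the definition of moralization, and therefore complete in $H^\sim$ by the hypothesis $H^\sim \supset G^\rM$.

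Combining this completeness with perfectness of $H$, I would show that the $\cO$-maximal element $t_s$ of $C_s$ satisfies $C_s \subset \{t_s\} \cup \pa_H(t_s)$: every other element of $C_s$ is $\cO$-earlier than $t_s$ and joined to $t_s$ in $H^\sim$, and so, by the topological ordering of $H$, must be an $H$-parent of $t_s$. Grouping factors by the assignment $s \mapsto t_s$ therefore rewrites $p$ as $p(x) = \prod_{t \in N} \phi_t\bigl(x_{\{t\} \cup \pa_H(t)}\bigr)$, where $\phi_t := \prod_{s : t_s = t} k^s$ and empty products are taken to be $1$.

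The final step is to convert this ``family-supported'' factorization into a product of Markov kernels over $H$, which I would carry out by reverse $\cO$-induction on $|N|$. For the $\cO$-maximal node $t^*$ of $H$, the factor $\phi_{t^*}$ is the only one depending on $x_{t^*}$ (since no other family $\{t\} \cup \pa_H(t)$ can contain the $H$-childless $t^*$); setting $Z(x_{\pa_H(t^*)}) := \int \phi_{t^*} \, d\mu_{t^*}(x_{t^*})$ and $k'_{t^*}(x_{t^*} \mid x_{\pa_H(t^*)}) := \phi_{t^*}/Z$ yields the required kernel (the set $\{Z = 0\}$ is $P$-negligible and is handled by a fixed default kernel). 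The residual $Z$ is supported on $\pa_H(t^*)$, which is itself complete in $H^\sim$ by perfectness; letting $t^{**}$ be its $\cO$-maximum, the same clique-to-family argument gives $\pa_H(t^*) \subset \{t^{**}\} \cup \pa_H(t^{**})$, so $Z$ can be absorbed into $\phi_{t^{**}}$. Removing $t^*$ preserves perfectness of $H[N \setminus \{t^*\}]$ (as $t^*$ is $H$-childless) and the family-supported form, so the induction closes.

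The main obstacle is largely bookkeeping: making the induction step rigorous requires handling the null set $\{Z = 0\}$ and checking that the absorbed factor still has the claimed support. The substantive ingredient is perfectness, which enters twice, first in assigning each $k^s$ to a single family $\{t_s\} \cup \pa_H(t_s)$, and again in re-absorbing each normalizer $Z$ into an earlier family. A direct d-separation route via Theorem \ref{thmDSep}(iii) looked harder to me, because trails in $G$ passing through $H$-children of a node $s$ are not evidently blocked by $\pa_H(s)$.
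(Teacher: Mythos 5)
Your proposal is correct, and at the top level it follows the same route as the paper: reduce to $\cP^G \subset \cP^H$ via Lemma \ref{inclusionedgeset}, pass from the directed factorisation of $P$ over $G$ to a clique-wise factorisation over $G^\rM$ (hence over $H^\sim$), and then convert that into a directed factorisation over the perfect DAG $H$. The difference is that the paper simply cites Lemma 5.9 and Proposition 5.15 of \citet{cowell1999} for these two steps, whereas you prove them from scratch: your clique-to-family assignment $s \mapsto t_s$ together with the reverse-$\cO$ normalisation-and-absorption induction is in substance a proof of that Proposition 5.15. Two small refinements. First, the initial use of perfectness is not actually needed: the inclusion $C_s \subset \{t_s\} \cup \pa_H(t_s)$ follows from completeness of $C_s$ in $H^\sim$ together with the topological ordering alone; perfectness is genuinely needed only to make $\pa_H(t^*)$ complete so that the normaliser $Z$ can be absorbed into an earlier family. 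Second, besides $\{Z = 0\}$ you must also handle $\{Z = \infty\}$, since the grouped factor $\phi_{t^*}$ contains kernels $k^s$ with $s \neq t^*$ and $t^* \in \pa_G(s)$, which need not integrate to anything finite in $x_{t^*}$; however $Z \cdot \prod_{t \neq t^*}\phi_t$ is the marginal density of $P_{N\setminus\{t^*\}}$, so $\{Z=\infty\}$ is $P$-null and the same default-kernel fix applies. With those points attended to, your argument is a complete and self-contained proof, which is arguably a gain over the paper's citation-based one.
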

\begin{proof}
    Let $P \in \cP^G$. By Lemma 5.9 from \citet{cowell1999} we know that $P$ factorises undirectedly\footnote{For a definition of this type of factorisation see Section 5.2 of \citet{cowell1999}} over the undirected graph $G^\rM$ and thus over any undirected graph $L = (N, E_L)$ containing $G^\rM$. From Proposition 5.15 in \citet{cowell1999} we know that $P$ factorises (directedly) over any perfect directed graph $H$ such that $H^\sim = L$. Thus, $P \in \mathcal{P}^{H}$, and by Lemma \ref{inclusionedgeset}, $P \in \mathcal{P}^{G^{\prime}}$. Therefore when $H^\sim \supset G^\rM$ we have $\cP^{G'} \supset \cP^{H} \supset \cP^{G}$.
\end{proof}

From this theorem we can conclude that if we reverse all the edges of $G$, fix a topological ordering such that $\leaves(G)$ are oldest, and then add edges consonant with this topological ordering until both $G'$ is perfect and $G'^{\sim} \supset G^{\rM}$, we obtain an inverse of $G$ that satisfies our goal. The example in Figure \ref{figure:imperfectGPrime} shows however that the condition that $G'$ needs to contain a perfect subgraph $H$ such that $H^\sim \supset G^\rM$ is not a necessary condition.

\begin{figure}[ht]
    \centering
\begin{tikzpicture}[]
    \node (a) at (0,0){
        \begin{tikzpicture}[scale=1.3]
            \node (a) at (0,0) [neuron] {};
            \node (b) at (-1,.3) [neuron] {};
            \node (c) at (1,.3) [neuron] {};
            \node (d) at (-.6, -.7) [neuron] {};
            \node (e) at (.6, -.7) [neuron] {};
            \node (g) at (-1.6, .8) {$G$};
            \draw[->] (b) to (a);
            \draw[->] (c) to (a);
            \draw[->] (b) to (d);
            \draw[->] (c) to (e);
        \end{tikzpicture}
    };
    \node (b) at (6,0) {
        \begin{tikzpicture}[scale=1.3]
            \node (a) at (0,0) [neuron] {};
            \node (b) at (-1,.3) [neuron] {};
            \node (c) at (1,.3) [neuron] {};
            \node (d) at (-.6, -.7) [neuron] {};
            \node (e) at (.6, -.7) [neuron] {};
            \node (g) at (-1.6, .8) {$G'$};
            \draw[farrow] (b) to (a);
            \draw[farrow] (c) to (a);
            \draw[farrow] (b) to (d);
            \draw[farrow] (c) to (e);
            \draw[->] (d) to (a);
            \draw[->] (e) to (a);
            \draw[->] (b) to (c);
            \draw[->] (e) to (b);
        \end{tikzpicture}
    };
\end{tikzpicture}
\caption{Pair of DAGs $G, G'$ that satisfies Goal II but $G'$ does not satisfy the condition in Theorem \ref{sufficientCondition}. One can check that $\cP^{G'} \supset \cP^G$ by verifying that Condition (iv) of Theorem \ref{thmDSep} is satisfied for all nodes.}
\label{figure:imperfectGPrime}
\end{figure}
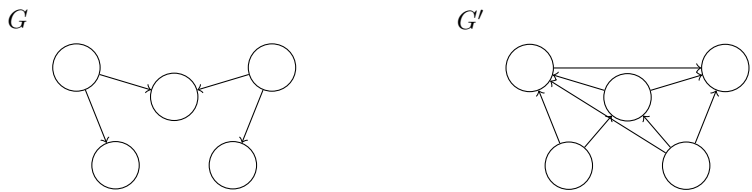

We do have the following necessary conditions on the graph $G'$ to satisfy our goal:

\begin{theorem} \label{necessaryCondition}
    Let $G, G'$ be DAGs. If $G'$ is such that $\cP^{G'} \supset \cP^G$ and $G' \supset G^*$, then $G'^\sim \supset G^\rM$ and for every $s$ in $N$, the vertex-induced subgraph $G'[\{s\} \cup \des_G(s)]$ contains a perfect subgraph $H_s$, such that $H^\sim_s \supset G^\rM[\{s\} \cup \des_G(s)]$. 
\end{theorem}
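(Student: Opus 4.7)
For the first assertion $G'^\sim \supset G^\rM$, I would verify separately that (i) the skeleton of $G$ lies in $G'^\sim$, which is immediate from $G' \supset G^*$, and (ii) any two distinct co-parents $t_1, t_2 \in \pa_G(w)$ of a node $w$ are joined in $G'$. For (ii), fix a topological ordering $\cO'$ of $G'$ with $\cO'(t_1) < \cO'(t_2)$ and suppose toward contradiction that $t_1, t_2$ are not joined in $G'$. Theorem~\ref{thmDSep}(iv) at $t_2$ then gives $t_2 \perp_G t_1 \mid \pa_{G'}(t_2)$. However, $t_2 \to w$ in $G$ forces $w \to t_2$ in $G^* \subset G'$, so $w \in \pa_{G'}(t_2)$; consequently the $G$-trail $t_1 \to w \leftarrow t_2$ has its v-structure at $w$ opened by the conditioning set, hence is unblocked, contradicting the d-separation.

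For the second assertion, fix $s \in N$ and set $A = \{s\} \cup \des_G(s)$. The starting observation is that $A$ is closed under taking $G$-children (descendants of descendants are descendants), from which one deduces $G^\rM[A] = G[A]^\rM$: every moralization edge with both endpoints in $A$ arises from the parents in $G$ of some node that itself lies in $A$. Running the argument of the first assertion with all referenced nodes constrained to $A$ then yields $G'[A]^\sim \supset G^\rM[A]$, so the moral edges of $A$ are already present (in some orientation) inside $G'[A]$.

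To produce $H_s$, my plan is to start from the subgraph $H_s^0 \subset G'[A]$ whose edges are exactly the edges of $G^\rM[A]$ oriented as in $G'[A]$, and then iteratively adjoin further edges of $G'[A]$ between any two parents (in the current graph) of a common node that are not yet joined, continuing until the resulting DAG is perfect. The main obstacle -- which I expect to be the crux of the proof -- is showing that such joining edges are always available inside $G'[A]$, equivalently that any $u_1, u_2 \in \pa_{G'}(v) \cap A$ with $v \in A$ are joined in $G'$. I would again argue this by contradiction using Theorem~\ref{thmDSep}(iv) at $u_2$ with $\cO'(u_1) < \cO'(u_2)$, splitting into cases on whether each edge $u_i \to v$ in $G'$ originates from $G^*$ (so $v \to u_i$ in $G$) or is ``extra.'' The ``both from $G^*$'' case is immediate via the fork trail $u_1 \leftarrow v \to u_2$ in $G$, because $v \in \ch_{G'}(u_2)$ gives $v \notin \pa_{G'}(u_2)$. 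The extra-edge cases require more delicate $G$-trail construction, exploiting that an extra $G'$-edge $u_2 \to v$ forbids $v$ from being a $G$-descendant of $u_2$ (else $G^*$ and the extra edge would form a $G'$-cycle) and, more generally, that no $G$-ancestor of $u_2$ reachable from $u_2$ via $G^*$ can appear in $\pa_{G'}(u_2)$; these restrictions allow one to construct an unblocked $G$-trail either through a shared $G$-child of $u_1, u_2$ (whose v-structure is opened by its $G^*$-reversed edge back to $u_2$) or through the common ancestor $s$ of all nodes in $A$.
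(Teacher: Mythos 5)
Your treatment of the first assertion and your overall decomposition (restrict to $A=\{s\}\cup\des_G(s)$, observe that $s$ is the unique root there so that $G^\rM[A]=(G[A])^\rM$, then grow a perfect subgraph inside $G'[A]$ starting from the reversed and moral edges) match the paper, which routes the statement through Proposition \ref{joinParentsInG}, Lemma \ref{lemmaVertexInducedSubgraph} and Proposition \ref{one-root-proposition}. The gap is in the step you yourself flag as the crux. You reformulate the availability of the joining edges as the claim that any two nodes $u_1,u_2\in\pa_{G'}(v)\cap A$ with $v\in A$ are joined in $G'$, and propose to prove that. This claim is false. Take $G$ with edges $r\to a$, $r\to b$, $a\to c$, $b\to d$ (single root $r$, so $A=N$ for $s=r$), and let $H$ be $G^*$ together with the edges $a\to b$, $d\to a$, $c\to d$; this $H$ is perfect with $H^\sim\supset G^\rM$, hence $\cP^{H}\supset\cP^{G}$ by Theorem \ref{sufficientCondition}. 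Now let $G'$ be $H$ with the additional edge $c\to r$. Then $G'\supset G^*$ and $\cP^{G'}\supset\cP^{H}\supset\cP^{G}$ by Lemma \ref{inclusionedgeset}, yet $b,c\in\pa_{G'}(r)$ are not joined in $G'$. So $G'[A]$ need not be perfect, and a proof aimed at your ``equivalent'' formulation cannot succeed.

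What is actually needed, and what the paper proves, is the weaker invariant that any two parents \emph{in the iteratively constructed graph} of a common node are joined in $G'$, in the orientation consistent with a fixed topological ordering of $G'$. Because the intermediate graphs contain only reversed edges of $G$, moralization edges, and previously added joining edges, their parent sets are controlled, and the extra edges of $G'$ (such as $c\to r$ above) never enter the induction. Establishing this invariant is the real content of Proposition \ref{one-root-proposition}: it requires Lemma \ref{case0} and the two-case analysis distinguishing whether a node became a parent of the current vertex via a reversed edge of $G$ or via an earlier joining step, together with an explicit unblocked-trail construction through the root. Your sketch gestures at the right ingredients (trails through a shared child whose v-structure is opened, and trails through the common ancestor $s$), but without restricting attention to parents in the intermediate graphs the case analysis you outline does not close, and with that restriction the argument is essentially the paper's; either way the proposal as written does not yet constitute a proof of the second assertion.
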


This theorem is based on the following two propositions. We will first prove both propositions and then show how Theorem \ref{necessaryCondition} can be obtained from it. 

\begin{proposition} \label{joinParentsInG}
    Let $\cP^{G'} \supset \cP^G$ and $G' \supset G^*$. Then $G'^\sim \supset G^\rM$. 
\end{proposition}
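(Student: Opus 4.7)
The plan is to decompose $G^\rM$ into its two contributions: the skeleton edges $E^\sim = E \cup E^*$ and the moralising edges $E_{\pa(G)}^\sim$. The skeleton part is essentially free from the hypothesis: since $G' \supset G^*$, we immediately get $(E')^\sim \supset E^* \cup (E^*)^* = E \cup E^* = E^\sim$. So the real content is showing that any two co-parents in $G$ are joined in $G'$.

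To that end, I would fix a node $s \in N$ with two distinct parents $t_1, t_2 \in \pa_G(s)$ and argue by contradiction, supposing that $t_1$ and $t_2$ are not joined in $G'$. Choose a topological ordering $\cO'$ of $G'$; by symmetry I can assume $\cO'(t_1) < \cO'(t_2)$, so $t_1 \in \pr^{\cO'}(t_2)$. Applying Theorem \ref{thmDSep}(iv) with the hypothesis $\cP^{G'} \supset \cP^G$ gives $t_2 \perp_G \pr^{\cO'}(t_2) \mid \pa_{G'}(t_2)$, and in particular
\begin{align*}
    t_1 \perp_G t_2 \mid \pa_{G'}(t_2).
\end{align*}

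The key geometric observation is that the single trail $\gamma : t_1 \to s \leftarrow t_2$ in $G$ already violates this d-separation. Indeed, $s$ is a v-structure on $\gamma$; and because $t_2 \to s \in E$ we have $s \to t_2 \in E^* \subset E'$, so $s \in \pa_{G'}(t_2)$, i.e.\ $s$ is in the conditioning set. A v-structure is not blocked by a set that contains it, so the internal vertex does not block $\gamma$. The endpoints do not block it either: $t_2 \notin \pa_{G'}(t_2)$ since the graph has no self-loops, and $t_1 \notin \pa_{G'}(t_2)$ is precisely the standing assumption that $t_1, t_2$ are not joined. Hence $\gamma$ is active under $\pa_{G'}(t_2)$, contradicting the d-separation derived above. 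This forces $t_1$ and $t_2$ to be joined in $G'$, so $(t_1,t_2) \in (E')^\sim$, and combining with the skeleton inclusion yields $G'^\sim \supset G^\rM$.

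I do not anticipate a serious obstacle: the argument is essentially a clean application of the equivalence $(i) \Leftrightarrow (iv)$ in Theorem \ref{thmDSep}, exploiting the fact that $G^* \subset G'$ forces $s$ itself into $\pa_{G'}(t_2)$, which is exactly what activates the v-structure. The only point that needs a bit of care is the ``WLOG'' on the ordering of $t_1, t_2$, which is handled by using a topological ordering of $G'$ rather than of $G$.
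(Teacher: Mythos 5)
Your proof is correct and follows essentially the same route as the paper's: assume two co-parents of $s$ in $G$ are not joined in $G'$, invoke Condition (iv) of Theorem \ref{thmDSep} to get a d-separation statement conditioned on the $G'$-parents of the younger one, and contradict it with the trail $t_1 \to s \leftarrow t_2$, whose v-structure $s$ is activated because $G' \supset G^*$ forces $s$ into that parent set. Your explicit handling of the skeleton edges and of the endpoints of the trail is slightly more careful than the paper's write-up, but the argument is the same.
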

\begin{proof}
    Let $t_1, t_2 \in \pa_G(s)$ such that $t_1$ and $t_2$ are not joined in $G$ and assume WLOG $\cO'(t_2) < \cO'(t_1)$. 
    Assume for a contradiction that $t_{2} \notin \mathrm{pa}_{G^{\prime}}\left(t_{1}\right)$. By Condition (iv) of Theorem \ref{thmDSep} we must have that $t_1 \perp_G t_2 | \pa_{G'}(t_1)$. However, since $G' \supset G^*$, we know $s \in \pa_{G'}(t_1)$. Furthermore, $s$ is a v-structure on the trail $t_1, s, t_2$ in $G$. Therefore this trail is unblocked by $\pa_{G'}(t_1)$ and we arrive at a contradiction. Thus, necessarily $t_{2} \rightarrow t_{1}$ in $G^{\prime}$.
\end{proof}

\begin{proposition} \label{one-root-proposition}
    If $\cP^{G'} \supset \cP^G$, $G' \supset G^*$, and $|\roots(G)|=1$, then $G'$ contains a perfect subgraph $H$ such that $H^\sim \supset G^\rM$. 
\end{proposition}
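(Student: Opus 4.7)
The plan is to proceed by induction on $|N|$. The base case $|N|=1$ is handled by $H=(\{r\},\emptyset)$. For the induction step with $|N|\geq 2$, I fix a topological ordering $\cO'$ of $G'$ and take $\ell$ to be its oldest node. Then $\ell\in\roots(G')$, and since $G'\supset G^*$ any $G$-child of $\ell$ would produce a $G'$-parent of $\ell$; hence $\ell\in\leaves(G)$, and $\ell\neq r$. Setting $\tilde G:=G[N\setminus\{\ell\}]$ and $\tilde G':=G'[N\setminus\{\ell\}]$, I would verify that $\roots(\tilde G)=\{r\}$, $\tilde G'\supset\tilde G^*$, and $\cP^{\tilde G'}\supset\cP^{\tilde G}$; the last follows by extending any $\tilde P\in\cP^{\tilde G}$ to $P\in\cP^G$ with $\ell$ independent under some density $\nu$, invoking $P\in\cP^{G'}$, and substituting a value $x_\ell^0$ with $\nu(x_\ell^0)>0$ into the $G'$-factorisation (valid because $\ell$ is a $G'$-root). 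The induction hypothesis then yields a perfect $\tilde H\subset\tilde G'$ with $\tilde H^\sim\supset\tilde G^\rM$.

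To assemble $H$, I would add $\ell$ back as a source together with outgoing edges $\ell\to v$ for every $v$ in $C:=\pa_G(\ell)\cup\{v\in N\setminus\{\ell\}:v\text{ is a }\tilde H\text{-ancestor of some }u\in\pa_G(\ell)\}$. The choice of $C$ is dictated by two requirements. First, the only $G^\rM$-neighbours of $\ell$ are $\pa_G(\ell)$ (as $\ell$ is a leaf and thus has no co-parents), so $C\supset\pa_G(\ell)$ suffices to guarantee $H^\sim\supset G^\rM$ given $\tilde H^\sim\supset\tilde G^\rM$. Second, for $s\in C$ the set $\pa_H(s)=\pa_{\tilde H}(s)\cup\{\ell\}$ must be complete in $H$; pairs within $\pa_{\tilde H}(s)$ are joined by perfectness of $\tilde H$, and each $t\in\pa_{\tilde H}(s)$ is joined to $\ell$ precisely when $\ell\to t$ lies in $H$, forcing $\pa_{\tilde H}(s)\subset C$. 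The displayed $C$ is the closure of $\pa_G(\ell)$ under the $\pa_{\tilde H}$-operator.

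The crux is proving $C\subset\ch_{G'}(\ell)$, which simultaneously places $H$ inside $G'$ and supplies the $\ell\to t$ edges demanded above. The inclusion $\pa_G(\ell)\subset\ch_{G'}(\ell)$ is immediate from $G'\supset G^*$. For $v\in C\setminus\pa_G(\ell)$, a $\tilde H$-ancestor of some $u\in\pa_G(\ell)$ with $v\neq u$, I proceed by contradiction: suppose $\ell\notin\pa_{G'}(v)$. First I observe $v\neq r$: otherwise the $\tilde H$-path $r\to z_1\to\cdots\to u$ would place $u$ in $\des_{G'}(r)$, while any $G$-path $r\to w_1\to\cdots\to u$ has $w_1\in\ch_G(r)$, giving $w_1\in\pa_{G'}(r)\cap\des_{G'}(r)$ and hence a $G'$-cycle. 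With $v\neq r$ and $\ell$ the $\cO'$-oldest node, Theorem \ref{thmDSep}(iv) yields $v\perp_G\ell\mid\pa_{G'}(v)$, which I refute by exhibiting the $G$-trail $v\leftarrow y_{p-1}\leftarrow\cdots\leftarrow r\to w_1\to\cdots\to u\to\ell$, where $r\to\cdots\to v$ and $r\to\cdots\to u$ are $G$-paths that exist precisely because $G$ has the unique root $r$ (truncated at a common intermediate node if needed to ensure distinctness). Every interior node is a non-collider: those on the $v$-side are $G$-ancestors of $v$ and so $G^*$-descendants of $v$; those on the $u$-side, together with $r$, are $G^*$-descendants of $u$ which itself lies in $\des_{\tilde H}(v)\subset\des_{G'}(v)$. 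So every intermediate is a $G'$-descendant of $v$ and thus avoids $\pa_{G'}(v)$, the trail is unblocked, and the contradiction forces $\ell\to v$ into $G'$. The hardest part is verifying that the unique-root assumption on $G$ both supplies the $G$-paths needed for the trail and closes off the $v=r$ case via the DAG/cycle argument.
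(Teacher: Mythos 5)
The crux of your inductive step fails at the claim that $C\supset\pa_G(\ell)$ together with $\tilde{H}^\sim\supset\tilde{G}^\rM$ guarantees $H^\sim\supset G^\rM$. Deleting the leaf $\ell$ removes from the moral graph exactly the edges joining the members of $\pa_G(\ell)$: for $t_1,t_2\in\pa_G(\ell)$ the edge $t_1-t_2$ lies in $G^\rM$ (common child $\ell$) but in general not in $\tilde{G}^\rM$, and it is not incident to $\ell$, so it is supplied neither by the induction hypothesis nor by your added edges $\ell\to v$. Concretely, take $G$ with edges $r\to t_1\to\ell$, $r\to a\to t_2\to\ell$ and $G'=G^*\cup\{t_2\to t_1,\ a\to t_1,\ \ell\to a\}$; this $G'$ is perfect with $G'^\sim\supset G^\rM$, so $\cP^{G'}\supset\cP^G$ by Theorem \ref{sufficientCondition}, and $G'\supset G^*$, $|\roots(G)|=1$. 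Here $\ell$ is the unique root of $G'$, $\tilde{G}^\rM=\{r-t_1,\ r-a,\ a-t_2\}$, and $\tilde{H}=\{t_1\to r,\ a\to r,\ t_2\to a,\ a\to t_1\}$ is a perfectly legitimate output of your induction hypothesis (perfect, contained in $\tilde{G}'$, covering $\tilde{G}^\rM$) that contains no edge between $t_1$ and $t_2$. Your construction then gives $C=\{t_1,t_2,a\}$ and $H=\tilde{H}\cup\{\ell\to t_1,\ell\to t_2,\ell\to a\}$, which is indeed a perfect subgraph of $G'$, but $H^\sim\not\supset G^\rM$ because the moral edge $t_1-t_2$ is missing. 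So the proof as written does not establish the conclusion, even though the proposition itself is true (here $G'$ itself works).

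The gap is not cosmetic. The missing edges do lie in $G'$ (your own Theorem \ref{thmDSep}(iv) trail argument, or Proposition \ref{joinParentsInG}, shows the parents of $\ell$ are pairwise joined there), but simply inserting them into $H$ can create new incomplete parent sets and trigger a cascade of further completions; handling that cascade is precisely what the paper's non-inductive construction does, sweeping from the youngest node of $\cO'$ upward and completing parent sets at each step, with membership of each added edge in $G'$ certified by a trail argument through the unique root. To rescue your induction you would have to recurse on a strengthened object, e.g.\ $\tilde{G}$ augmented by edges making $\pa_G(\ell)$ complete (oriented compatibly with $G'$), and then re-verify $\cP^{\tilde{G}'}\supset\cP$ of that augmented graph and the analogue of $\tilde{G}'\supset\tilde{G}^*$ for it, none of which appears in your write-up. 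For what it is worth, the remaining ingredients are essentially sound: the marginalisation argument for $\cP^{\tilde{G}'}\supset\cP^{\tilde{G}}$ works (and is also a special case of Lemma \ref{lemmaVertexInducedSubgraph}), and your trail argument proving $C\subset\ch_{G'}(\ell)$ is correct and close in spirit to Lemma \ref{case0} in the paper's proof.
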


\begin{proof}
    Our approach will be to construct a subgraph $H$ of $G'$, by starting from $G^*$ and iteratively adding edges. We show that every edge in $H$ is also in $G'$ and that the end result is both perfect and such that $H^\sim \supset G^\rM$. 

    \subsubsection*{Construction of $H$}

    Let $\cO'$ be a topological ordering of $G'$. We use an iterative process $H_i = (N, E_i)$ such that $H_{|N|} = H$. We start from $H_{-1} = G^*$. Then we join all parents of the same node in $G$ according to $\cO'$ to obtain $H_0$. In every subsequent step we join the parents in $H_i$ of the $(i+1)$th youngest node according to $\cO'$. 
    Formally this procedure can be defined as follows:

    \begin{align}
        E_{-1} &= E^* \label{prodecureFirst}\\
        E_0 &= E_{-1} \cup E_{\pa_G(G)} \\
        E_{i+1} &= E_i \cup E_{\pa_{H_i}(r_i)}, \ \ \ \text{for} \ 0 \leq i \leq |N| - 1,
    \end{align}
    with, 
    \begin{align}
        E_{\pa_G(G)} &= \{(t_1,t_2): \exists s \in N \ \text{such that} \ t_1,t_2 \in \pa_G(s) \ \text{and} \ \cO'(t_1) < \cO'(t_2) \} \\
        E_{\pa_{H_i}(r_i)} &=  \{(t_1,t_2): t_1, t_2 \in \pa_{H_i}(r_i), \cO'(t_1) < \cO'(t_2) \} \\
        r_i \in N \ &\text{such that} \ \cO'(r_i) = |N| - i.\label{prodecureLast}
    \end{align}
    See Figure \ref{fig:exampleCourse} for an example.

    \begin{figure}
        \centering
        \begin{tikzpicture}
            \node (row0) {
                \begin{tikzpicture}
                    \node (G) {
                        \begin{tikzpicture}
                            \node (r5) [neuron] at (0,  0) {}; 
                            \node (r3) [neuron] at (-.7,1) {}; 
                            \node (r4) [neuron] at (0.7,1) {}; 
                            \node (r2) [neuron] at (-.7,2.2) {}; 
                            \node (r1) [neuron] at (0.7,2.2) {}; 
                            \node (r0) [neuron] at (0,  3.2) {}; 
        
                            \node at (-1, 3.8) {$G$};
        
                            \draw [iarrow] (r0) to (r1);
                            \draw [iarrow] (r0) to (r2);
                            \draw [iarrow] (r1) to (r4);
                            \draw [iarrow] (r2) to (r3);
                            \draw [iarrow] (r3) to (r5);
                            \draw [iarrow] (r4) to (r5);
                        \end{tikzpicture}
                    };
                    \node (Gp) [right=of G]{
                        \begin{tikzpicture}
                            \node (r5) [neuron] at (0,  0)   {$1$}; 
                            \node (r3) [neuron] at (-.7,1)   {$3$}; 
                            \node (r4) [neuron] at (0.7,1)   {$2$}; 
                            \node (r2) [neuron] at (-.7,2.2) {$4$}; 
                            \node (r1) [neuron] at (0.7,2.2) {$5$}; 
                            \node (r0) [neuron] at (0,  3.2) {$6$}; 
        
                            \node at (-1, 3.8) {$G'$};
        
                            \draw [farrow] (r0) to (r1);
                            \draw [farrow] (r0) to (r2);
                            \draw [farrow] (r1) to (r4);
                            \draw [farrow] (r2) to (r3);
                            \draw [farrow] (r3) to (r5);
                            \draw [farrow] (r4) to (r5);
        
                            \draw [iarrow] (r2) to (r1);
                            \draw [iarrow] (r4) to (r2);
                            \draw [iarrow] (r4) to (r3);
                            \draw [iarrow] (r5) to (r0);
                        \end{tikzpicture}
                    };
                \end{tikzpicture}
            };
            \node (row1) [below=of row0] {
                \begin{tikzpicture}
                    \node (Hm1) {
                        \begin{tikzpicture}
                            \node (r5) [neuron] at (0,  0)   {$r_5$}; 
                            \node (r3) [neuron] at (-.7,1)   {$r_3$}; 
                            \node (r4) [neuron] at (0.7,1)   {$r_4$}; 
                            \node (r2) [neuron] at (-.7,2.2) {$r_2$}; 
                            \node (r1) [neuron] at (0.7,2.2) {$r_1$}; 
                            \node (r0) [neuron] at (0,  3.2) {$r_0$}; 
    
                            \node at (-1, 3.8) {$H_{-1}$};
    
                            \draw [farrow] (r0) to (r1);
                            \draw [farrow] (r0) to (r2);
                            \draw [farrow] (r1) to (r4);
                            \draw [farrow] (r2) to (r3);
                            \draw [farrow] (r3) to (r5);
                            \draw [farrow] (r4) to (r5);
    
                        \end{tikzpicture}
                    };
                    \node (H0) [right=of Hm1]{
                        \begin{tikzpicture}
                            \node (r5) [neuron] at (0,  0)   {$r_5$}; 
                            \node (r3) [neuron] at (-.7,1)   {$r_3$}; 
                            \node (r4) [neuron] at (0.7,1)   {$r_4$}; 
                            \node (r2) [neuron] at (-.7,2.2) {$r_2$}; 
                            \node (r1) [neuron] at (0.7,2.2) {$r_1$}; 
                            \node (r0) [neuron] at (0,  3.2) {$r_0$}; 
    
                            \node at (-1, 3.8) {$H_0$};
    
                            \draw [farrow] (r0) to (r1);
                            \draw [farrow] (r0) to (r2);
                            \draw [farrow] (r1) to (r4);
                            \draw [farrow] (r2) to (r3);
                            \draw [farrow] (r3) to (r5);
                            \draw [farrow] (r4) to (r5);
    
                            \draw [aarrow] (r4) to (r3);
                        \end{tikzpicture}
                    };
                    \node (H1) [right=of H0]{
                        \begin{tikzpicture}
                            \node (r5) [neuron] at (0,  0)   {$r_5$}; 
                            \node (r3) [neuron] at (-.7,1)   {$r_3$}; 
                            \node (r4) [neuron] at (0.7,1)   {$r_4$}; 
                            \node (r2) [neuron] at (-.7,2.2) {$r_2$}; 
                            \node (r1) [neuron] at (0.7,2.2) {$r_1$}; 
                            \node (r0) [neuron] at (0,  3.2) {$r_0$}; 
    
                            \node at (-1, 3.8) {$H_{1}$};
    
                            \draw [farrow] (r0) to (r1);
                            \draw [farrow] (r0) to (r2);
                            \draw [farrow] (r1) to (r4);
                            \draw [farrow] (r2) to (r3);
                            \draw [farrow] (r3) to (r5);
                            \draw [farrow] (r4) to (r5);
    
                            \draw [aarrow] (r2) to (r1);
                            \draw [iarrow] (r4) to (r3);
                        \end{tikzpicture}
                    };
                    \node (H2) [right=of H1]{
                        \begin{tikzpicture}
                            \node (r5) [neuron] at (0,  0)   {$r_5$}; 
                            \node (r3) [neuron] at (-.7,1)   {$r_3$}; 
                            \node (r4) [neuron] at (0.7,1)   {$r_4$}; 
                            \node (r2) [neuron] at (-.7,2.2) {$r_2$}; 
                            \node (r1) [neuron] at (0.7,2.2) {$r_1$}; 
                            \node (r0) [neuron] at (0,  3.2) {$r_0$}; 
    
                            \node at (-1, 3.8) {$H_{2}$};
    
                            \draw [farrow] (r0) to (r1);
                            \draw [farrow] (r0) to (r2);
                            \draw [farrow] (r1) to (r4);
                            \draw [farrow] (r2) to (r3);
                            \draw [farrow] (r3) to (r5);
                            \draw [farrow] (r4) to (r5);
    
                            \draw [iarrow] (r2) to (r1);
                            \draw [aarrow] (r4) to (r2);
                            \draw [iarrow] (r4) to (r3);
                        \end{tikzpicture}
                    };
                \end{tikzpicture}
            };
            \node (row2) [below=of row1]{
                \begin{tikzpicture}
                    \node (H3) {
                        \begin{tikzpicture}
                            \node (r5) [neuron] at (0,  0)   {$r_5$}; 
                            \node (r3) [neuron] at (-.7,1)   {$r_3$}; 
                            \node (r4) [neuron] at (0.7,1)   {$r_4$}; 
                            \node (r2) [neuron] at (-.7,2.2) {$r_2$}; 
                            \node (r1) [neuron] at (0.7,2.2) {$r_1$}; 
                            \node (r0) [neuron] at (0,  3.2) {$r_0$}; 
    
                            \node at (-1, 3.8) {$H_{3}$};
    
                            \draw [farrow] (r0) to (r1);
                            \draw [farrow] (r0) to (r2);
                            \draw [farrow] (r1) to (r4);
                            \draw [farrow] (r2) to (r3);
                            \draw [farrow] (r3) to (r5);
                            \draw [farrow] (r4) to (r5);
    
                            \draw [iarrow] (r2) to (r1);
                            \draw [iarrow] (r4) to (r2);
                            \draw [iarrow] (r4) to (r3);
                        \end{tikzpicture}
                    };
                    \node (H4) [right=of H3]{
                        \begin{tikzpicture}
                            \node (r5) [neuron] at (0,  0)   {$r_5$}; 
                            \node (r3) [neuron] at (-.7,1)   {$r_3$}; 
                            \node (r4) [neuron] at (0.7,1)   {$r_4$}; 
                            \node (r2) [neuron] at (-.7,2.2) {$r_2$}; 
                            \node (r1) [neuron] at (0.7,2.2) {$r_1$}; 
                            \node (r0) [neuron] at (0,  3.2) {$r_0$}; 
    
                            \node at (-1, 3.8) {$H_{4}$};
    
                            \draw [farrow] (r0) to (r1);
                            \draw [farrow] (r0) to (r2);
                            \draw [farrow] (r1) to (r4);
                            \draw [farrow] (r2) to (r3);
                            \draw [farrow] (r3) to (r5);
                            \draw [farrow] (r4) to (r5);
    
                            \draw [iarrow] (r2) to (r1);
                            \draw [iarrow] (r4) to (r2);
                            \draw [iarrow] (r4) to (r3);
                        \end{tikzpicture}
                    };
                    \node (H5) [right=of H4]{
                        \begin{tikzpicture}
                            \node (r5) [neuron] at (0,  0)   {$r_5$}; 
                            \node (r3) [neuron] at (-.7,1)   {$r_3$}; 
                            \node (r4) [neuron] at (0.7,1)   {$r_4$}; 
                            \node (r2) [neuron] at (-.7,2.2) {$r_2$}; 
                            \node (r1) [neuron] at (0.7,2.2) {$r_1$}; 
                            \node (r0) [neuron] at (0,  3.2) {$r_0$}; 
    
                            \node at (-1, 3.8) {$H_{5}$};
    
                            \draw [farrow] (r0) to (r1);
                            \draw [farrow] (r0) to (r2);
                            \draw [farrow] (r1) to (r4);
                            \draw [farrow] (r2) to (r3);
                            \draw [farrow] (r3) to (r5);
                            \draw [farrow] (r4) to (r5);
    
                            \draw [iarrow] (r2) to (r1);
                            \draw [iarrow] (r4) to (r2);
                            \draw [iarrow] (r4) to (r3);
                        \end{tikzpicture}
                    };
                \end{tikzpicture}
            };
        \end{tikzpicture}
        \caption{(\textbf{1\textsuperscript{st} row}) Example pair of DAGs $G, G'$ satisfying the requirements of Proposition \ref{one-root-proposition}. The fact that $\cP^{G'} \supset \cP^G$ can be  checked by verifying that Condition (iv) of Theorem \ref{thmDSep} is satisfied for all nodes. (\textbf{2\textsuperscript{nd} and 3\textsuperscript{rd} row}) Example course of the inversion procedure \eqref{prodecureFirst}--\eqref{prodecureLast} for the pair $G, G'$ depicted in top row. $H_{-1}$ is the version with the edges of $G$ reversed. In $H_0$ edges $E_{\pa_G(G)}$ for joining the parents in $G$ are added. For $H_{i+1}, i \geq 0$ the parents in $H_i$ of $r_{i}$ are joined, by adding $E_{\pa_{H_{i}}\left(r_{i}\right)}$ to the edge set. Note that because the parent set of $r_i$ in $H_i$ for $i \geq 2$ is already complete, no new edges are added in $H_3, H_4$ and $H_5$.}
        \label{fig:exampleCourse}
    \end{figure}
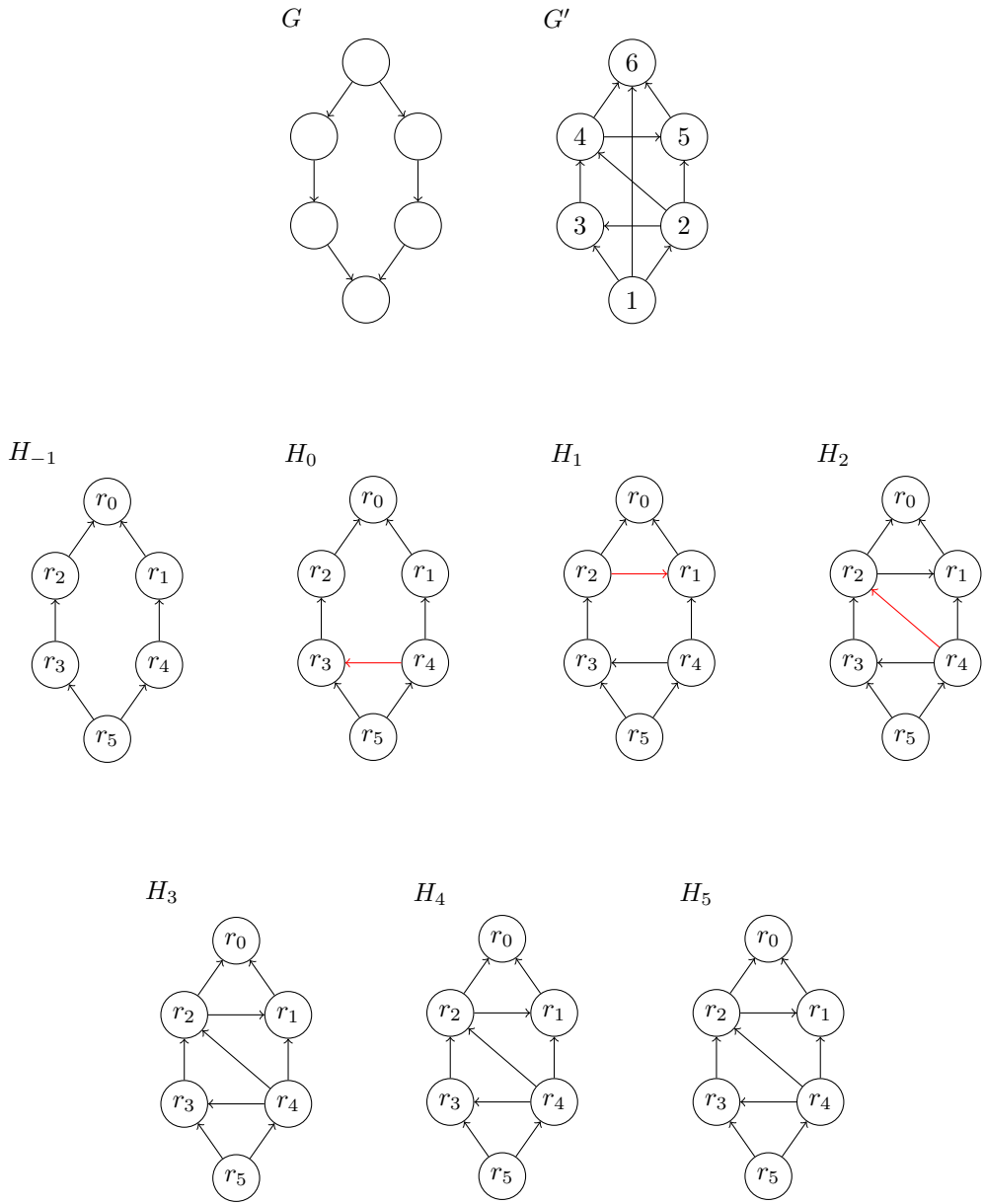

    \subsubsection*{$H_{-1}$ is a subgraph of $G'$}
    This follows directly from the fact that $G' \supset G^*$. 

    \subsubsection*{$H_{0}$ is a subgraph of $G'$}
    This follows from Proposition \ref{joinParentsInG} and  the fact that both $H_{0}$ and $G^{\prime}$ are consonant with $\mathcal{O}'$.

    \subsubsection*{$H_{i+1}$ is a subgraph of $G'$ for $0 \leq i \leq |N| - 1$}

    \begin{lemma} \label{case0}
        Let $\cP^{G'} \supset \cP^G$, $G' \supset G^*$, $|\roots(G)|=1$ and $\cO'$ a topological ordering for $G'$. Now let $r, u, v \in N$ such that $\cO'(v) < \cO'(u) < \cO'(r)$ and $r \to v$ in $G$. Then, $v \to u$ in $G'$. 
    \end{lemma}
    \begin{proof}
        Let $r_0$ be the youngest node in $\cO'$ and $\gamma_1$ be a path from $r_0$ to $u$ in $G$. Since $\cO'(u) > \cO'(v)$ we know $v \notin \gamma_1$. Now let $\gamma_2$ be a path in $G$ from $r_0$ to $r$. Since $\cO'(u) < \cO'(r)$ we know $u \notin \gamma_2$. Now let $\gamma = \gamma_2^* ; (\gamma_1, v)$ be a trail in $G$ from $u$ to $v$.  Since there are no v-structures on this trail and all nodes except $v$ are younger in $\cO'$ than $u$ it follows from property (iv) of Theorem \ref{thmDSep} that $v$ must be a parent of $u$ in $G'$.
    \end{proof}

    Suppose $(t_2, t_1) \in E_{\pa_{H_i}(r_i)}$, i.e.\ $t_2, t_1 \in \pa_{H_i}(r_i)$ and $\cO'(t_2) < \cO'(t_1)$. We need to show that $t_2 \to t_1$ in $G'$.  We distinguish two cases:\vspace{0.1cm}\\
    \textit{Case 1: There exists a $0 \leq j \leq i$ such that $r_j \to t_2$ in $G$.} \\
    This follows straight from Lemma \ref{case0} by setting $r = r_j, u = t_1, v = t_2$. \vspace{0.1cm}\\
    \textit{Case 2: For all $0 \leq j \leq i$ we have $r_j \not\to t_2$ in $G$.} \\
    See Figure \ref{figure:case2} for an illustration. Let $k = \min\{0 \leq j \leq i: (t_2, r_j) \in E_j\}$. Note that $(r_k, t_2) \in E_0$ follows from the definition of $k$ by contradiction. The definition of $H_{0}$ implies the existence of $s$ with $t_2 \rightarrow s\leftarrow r_{k}$ in $G$. The application of Lemma \ref{case0} with $r:=r_{k}, u = t_1$ and $v=s$ gives $s \in \pa_{G'}(t_1)$. Assume $t_2 \not\to t_1$ in $G'$ for a contradiction.  \vspace{0.1cm}\\
    We let $\gamma_1$ be a path in $G$ from $r_0$ to $t_1$, $\gamma_2$ a path in $G$ from $r_0$ to $r_k$ in $G$, and $\gamma = \gamma_2^* ; \gamma_1$ the concatenation of the reversed $\gamma_2$ and $\gamma_1$. Note that the trail $((\gamma, s), t_2)$ is not blocked by $\pa_{G'}(t_1)$, since for the only v-structure $(r_k, s, t_2)$ we have $s \in \pa_{G'}(t_1)$, and all other nodes on the path, except for $t_2$, are younger than $t_1$ in $\cO'$. However, by Condition (iv) of Theorem \ref{thmDSep} we know that this trail must be blocked and we arrive at a contradiction. Therefore we conclude  $t_2 \to t_1$ in $G'$.

    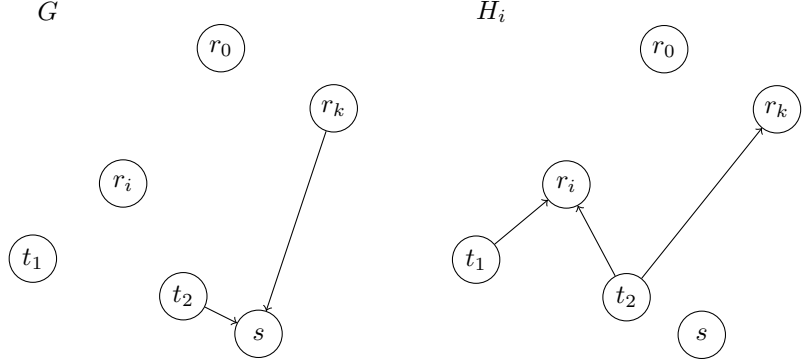
\begin{figure}[ht]
        \centering
        \begin{tikzpicture}
            \node (aa) {
                \begin{tikzpicture}
                    \node (t1) at (-.5,0) [neuron] {$t_1$};
                    \node (ri) at (0.7,1) [neuron]{$r_i$};
                    \node (t2) at (1.5,-0.5) [neuron] {$t_2$};
                    \node (s) at (2.5,-1) [neuron] {$s$};
                    \node (rj) at (3.5,2) [neuron] {$r_{k}$};
                    \node (r0) at (2,2.8) [neuron] {$r_0$};
    
                    \node at (-.3,3.3) {$G$};
    
                    \draw [iarrow] (t2) to (s);
                    \draw [iarrow] (rj) to (s);
    
                \end{tikzpicture}
            };
            \node (bb) [right=of aa] {
                \begin{tikzpicture}
                    \node (t1) at (-.5,0) [neuron] {$t_1$};
                    \node (ri) at (0.7,1) [neuron]{$r_i$};
                    \node (t2) at (1.5,-0.5) [neuron] {$t_2$};
                    \node (s) at (2.5,-1) [neuron] {$s$};
                    \node (rj) at (3.5,2) [neuron] {$r_{k}$};
                    \node (r0) at (2,2.8) [neuron] {$r_0$};
    
                    \node at (-.3,3.3) {$H_i$};
    
                    \draw [iarrow] (t1) to (ri);
                    \draw [iarrow] (t2) to (ri);
                    \draw [iarrow] (t2) to (rj);
                \end{tikzpicture}
            };
        \end{tikzpicture}
        \caption{Illustration of Case 2 in the proof of Proposition \ref{one-root-proposition}}
        \label{figure:case2}
    \end{figure}

    \subsubsection*{$H$ is perfect} 
    By adding the set $E_{\pa_{H_i}(r_i)}$ to the edge set, we join all the parents of $r_i$. After this step, no new parents of $r_i$ are created. We perform this step for every $i \in \{0,..., |N|-1\}$. Since $\{r_i:i=0,\dots,|N|-1\}= N$, the end result is perfect. 

    \subsubsection*{$H^\sim$ contains $G^\rM$} 
    This is ensured by adding the set $E_{\pa_G(G)}$. 

\end{proof}

\begin{remark} \label{rem:arbitraryTopologicalOrdering}
    Note that from the proof of Proposition \ref{one-root-proposition} it follows that the steps in the procedure described in \eqref{prodecureFirst}--\eqref{prodecureLast} are actually necessary for obtaining a graph $L$ that satisfies the following conditions: 
    \begin{itemize}
        \item[(i)] $\cP^L \supset \cP^G$,
        \item[(ii)] $L \supset G^*$,
        \item[(iii)] $\cO'$ is a topological ordering for $L$.
    \end{itemize}
    This means that the end result $H$ of the procedure will be a subset of any other graph $L$ satisfying conditions (i)-(iii). By Theorem \ref{sufficientCondition}, $H$ is also sufficient for satisfying these conditions.
    Furthermore, any other minimal inversion of $G$ can be obtained by using the same procedure \eqref{prodecureFirst}--\eqref{prodecureLast} but instead of $\cO'$ fixing a different topological ordering that is compatible with $G^*$. 
\end{remark}

\begin{remark}
    Since any perfect graph with a single leave has a unique topological ordering\footnote{This follows from the fact that every node has a unique youngest parent, and the single leave is the unique youngest node of the graph.}, it follows from the proposition that any DAG $G'$ for which there exists a DAG $G$ such that $\cP^{G'} \supset \cP^G$, $G' \supset G^*$ and $\roots(G) = 1$, has a unique topological ordering as well.  
\end{remark}

\begin{lemma} \label{lemmaVertexInducedSubgraph}
    Let $G, H$ be DAGs and $A \subset N$. If $G, H$ are such that $\cP^{G} \subset \cP^{H}$, then the same holds for the vertex-induced subgraph of both graphs: $\cP^{G[A]} \subset \cP^{H[A]}$.
\end{lemma}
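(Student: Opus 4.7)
The plan is to lift $P$ to a distribution $\tilde P$ on $\sX_N$ whose marginal on $A$ equals $P$ and which factorises over $G$; the hypothesis will then place $\tilde P$ in $\cP^H$, and I can exploit the special independence structure of $\tilde P$ to read off a factorisation of $P$ over $H[A]$ by marginalisation.

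Concretely, I would fix any probability density $q_s$ with respect to $\mu_s$ for each $s\in N\setminus A$ and put $\tilde p(x) := p(x_A)\prod_{s\in N\setminus A} q_s(x_s)$, so that $X_A$ has law $P$ and is independent of $X_{N\setminus A}$, whose coordinates are themselves mutually independent. Taking a factorisation $p(x_A)=\prod_{s\in A} k^s(x_s\mid x_{\pa_{G[A]}(s)})$ of $P$ over $G[A]$ and using $\pa_{G[A]}(s)=\pa_G(s)\cap A$, I set $\tilde k^s(x_s\mid x_{\pa_G(s)}):=k^s(x_s\mid x_{\pa_G(s)\cap A})$ for $s\in A$ and $\tilde k^s(x_s\mid x_{\pa_G(s)}):=q_s(x_s)$ for $s\in N\setminus A$. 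These are valid kernel functions and their product equals $\tilde p$, so $\tilde P\in\cP^G$; the hypothesis then gives $\tilde P\in\cP^H$.

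The main step is to show that the natural factorisation of $\tilde P$ over $H$, namely $\tilde p(x)=\prod_{s\in N}\tilde p(x_s\mid x_{\pa_H(s)})$, collapses onto $H[A]$. For $s\in A$, since $X_A\indep X_{N\setminus A}$ under $\tilde P$, the block $X_{\pa_H(s)\setminus A}$ is independent of $(X_s,X_{\pa_H(s)\cap A})$, so $\tilde p(x_s\mid x_{\pa_H(s)})=\tilde p(x_s\mid x_{\pa_H(s)\cap A})$. For $s\in N\setminus A$, the mutual independence of the coordinates outside $A$ together with $X_A\indep X_{N\setminus A}$ gives $\tilde p(x_s\mid x_{\pa_H(s)})=\tilde p(x_s)=q_s(x_s)$. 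Integrating the resulting product over $X_{N\setminus A}$, the $q_s$'s contribute unit mass and we are left with $p(x_A)=\prod_{s\in A}\tilde p(x_s\mid x_{\pa_{H[A]}(s)})$, i.e.\ $P\in\cP^{H[A]}$.

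The hard part is precisely this independence-driven collapse of the conditional densities: without the special structure I build into $\tilde P$, marginalising a $\cP^H$-distribution onto $A$ generically introduces dependencies absent from $H[A]$ (as in the elementary chain $a\to c\to b$ with $A=\{a,b\}$), so the choice of extension is what makes the marginalisation step work.
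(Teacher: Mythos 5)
Your proof is correct, but it takes a genuinely different route from the paper's. The paper argues entirely at the level of graphs: it invokes the equivalence $(i)\iff(iv)$ of Theorem \ref{thmDSep}, fixes a topological ordering of $H[A]$ compatible with one of $H$, and checks that any trail in $G[A]$ from $s$ to a predecessor $t$ that is blocked by $\pa_H(s)$ in $G$ remains blocked by $\pa_{H[A]}(s)$ in $G[A]$ (a v-structure blocker survives because $\des_{G[A]}(v)\subset\des_G(v)$ and $\pa_{H[A]}(s)\subset\pa_H(s)$; a non-collider blocker $u\in\pa_H(s)$ lies on the trail, hence in $A$, hence in $\pa_{H[A]}(s)$). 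You instead lift $P\in\cP^{G[A]}$ to $\tilde P=P\otimes\bigotimes_{s\notin A}q_s$, verify $\tilde P\in\cP^G$, push it through the hypothesis to get $\tilde P\in\cP^H$, and collapse the factorisation back onto $A$ using the built-in independence. Every step checks out: $\pa_{G[A]}(s)=\pa_G(s)\cap A$ under the paper's definition of $E[A]$, so your kernels for $\tilde P$ are legitimate, and the independence of $X_A$ from $X_{N\setminus A}$ does give $\tilde p(x_s\mid x_{\pa_H(s)})=\tilde p(x_s\mid x_{\pa_{H[A]}(s)})$ for $s\in A$. The one step you use without comment is that $\tilde P\in\cP^H$ yields the factorisation with the \emph{actual} conditional densities $\tilde p(x_s\mid x_{\pa_H(s)})$ as kernels, rather than merely some kernels; this is standard and follows from the paper's own Lemma \ref{5.14} ($(i)\implies(iv)$ plus the chain rule), but it deserves a sentence, and it is where the usual measure-theoretic care about versions and null sets hides. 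The trade-off: the paper's argument is combinatorial, needs no auxiliary distribution, and directly yields the d-separation statement for the induced subgraphs; yours is shorter on case analysis and more conceptual, but it is tied to the unrestricted kernel class (your extension $\tilde P$ must itself be expressible in the class), whereas the graph-theoretic proof is the one that survives the restriction discussion in the paper's final section.
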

\begin{proof}
    Let $\cO_{H}$ be a topological ordering for $H$ and $\cO_{H[A]}$ a topological ordering for $H[A]$ that preserves the order of $\cO_H$ and assume that $G, H$ are such that $\cP^{G} \subset \cP^{H}$. By Condition (iv) of Theorem \ref{thmDSep} we need to show that for all $s \in A$ we have $s \perp_{G[A]} \pr^{\cO_{H[A]}}(s) | \pa_{H[A]}(s)$. Therefore let $s \in A$ and $t \in \pr^{\cO_{H[A]}}(s)$ for which there exists a trail $\gamma$ in $G[A]$ from $s$ to $t$. We need to show that this trail is blocked by  $\pa_{H[A]}(s)$. Since Condition (iv) of Theorem \ref{thmDSep} holds for the original graphs $G$ and $H$, we know that this trail must be blocked by $\pa_{H}(s)$ in $G$. This implies that at least one of the following conditions must hold:
    \begin{itemize}
        \item[(i)] There is a v-structure $v$ on the trail such that neither $v$ nor its descendants in $G$ are in $\pa_{H}(s)$.
        \item[(ii)] There is a node $u$ on the trail that is not a v-structure and is in $\pa_{H}(s)$.
    \end{itemize}
   Since $\pa_{H[A]}(s) \subset \pa_{H}(s)$ and $\des_{G[A]}(v) \subset \des_{G}(v)$, the first condition remains valid. In case of the second condition, since all nodes on the trail are in $A$, $u$ must also be in $\pa_{H[A]}(s)$. Therefore, if the trail is blocked by $\pa_{H}(s)$ it is also blocked by $\pa_{H[A]}(s)$. 
\end{proof}

\begin{proof}[Proof of Theorem \ref{necessaryCondition}]
    The first part of the theorem follows directly from Proposition \ref{joinParentsInG}. Note that by Lemma \ref{lemmaVertexInducedSubgraph}, $\cP^{G'} \supset \cP^G$ implies $\cP^{G'[\{s\} \cup \des_G(s)]} \supset \cP^{G[\{s\} \cup \des_G(s)]}$, for any $s \in N$. Since $s$ is the unique root for $G[\{s\} \cup \des_G(s)]$, we know from Proposition \ref{one-root-proposition} that this implies that $G'[\{s\} \cup \des_G(s)]$ contains a perfect subgraph $H_s$, such that $H_s \supset G^\rM[\{s\} \cup \des_G(s)]$.  
\end{proof}

In practice, the inverse $G'$ is often obtained by simply inverting the edges in $G$. In this case we have the following necessary and sufficient condition to satisfy our goal. 

\begin{theorem} \label{primeIsStar}
    Let $G, G'$ be DAGs. If $G' = G^*$, then: $\cP^{G'} \supset \cP^{G} \iff \pa_G(s), \ch_G(s)$ are complete for all $s \in N$.
\end{theorem}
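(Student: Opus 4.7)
The plan is to handle the two directions separately, using Theorem \ref{sufficientCondition} for $(\Leftarrow)$ and combining Proposition \ref{joinParentsInG} with Theorem \ref{necessaryCondition} for $(\Rightarrow)$. In both directions, the key bookkeeping is that $G' = G^*$ shares its skeleton with $G$ and swaps the roles of parents and children at every node.

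For $(\Leftarrow)$, assume $\pa_G(s)$ and $\ch_G(s)$ are complete for every $s$. I would take $H := G' = G^*$ as the candidate perfect subgraph of itself. Perfectness of $H$ amounts to $\pa_{G^*}(s) = \ch_G(s)$ being complete in $G^*$, which holds because $\ch_G(s)$ is complete in $G$ and $G$ and $G^*$ have identical skeletons. For the containment $H^\sim \supset G^\rM$: $H^\sim = (G^*)^\sim = G^\sim$, and the extra moralization edges $E^\sim_{\pa(G)}$ are already present in $G^\sim$ precisely because $\pa_G(t)$ is complete for every $t$. Theorem \ref{sufficientCondition} then yields $\cP^{G'} \supset \cP^G$.

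For $(\Rightarrow)$, assume $\cP^{G^*} \supset \cP^G$. Since $G' = G^* \supset G^*$ trivially, Proposition \ref{joinParentsInG} gives $G'^\sim \supset G^\rM$, and since $G'^\sim = G^\sim$, any two parents of a common node in $G$ must already be joined in $G$; this gives completeness of $\pa_G(s)$ for every $s$. For the children, fix $s \in N$ and apply Theorem \ref{necessaryCondition} to obtain a perfect subgraph $H_s \subset G'[\{s\} \cup \des_G(s)]$ with $H_s^\sim \supset G^\rM[\{s\} \cup \des_G(s)]$. The only edges of $G^*$ incident to $s$ inside $\{s\} \cup \des_G(s)$ are $c \to s$ for $c \in \ch_G(s)$, because no parent of $s$ in $G$ can be a descendant of $s$. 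Hence $\pa_{H_s}(s) \subset \ch_G(s)$, while the skeleton containment forces the edges $s - c$ for every $c \in \ch_G(s)$ to lie in $H_s^\sim$, which pins $\pa_{H_s}(s) = \ch_G(s)$. Perfectness of $H_s$ then makes $\ch_G(s)$ complete in $H_s$, hence in $G$ (same skeleton as $G^*$). The main obstacle is exactly this "children complete" half: Proposition \ref{joinParentsInG} alone only constrains parents of common children, so one genuinely needs the local-perfectness content of Theorem \ref{necessaryCondition} applied to the descendant-induced subgraph, where $s$ is forced to be a unique root so that its outgoing edges in $G$ become an isolated in-star at $s$ in $G^*$.
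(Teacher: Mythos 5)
Your proof is correct, and the $(\Leftarrow)$ direction together with the parents-complete half of $(\Rightarrow)$ coincides with the paper's own argument (Theorem \ref{sufficientCondition} for sufficiency, Proposition \ref{joinParentsInG} for the parents). Where you genuinely diverge is the children-complete half of $(\Rightarrow)$. The paper proves it probabilistically and from scratch: assuming $u_1, u_2 \in \ch_G(s)$ are not joined, it exhibits an explicit $P \in \cP^G$ (namely $X_{u_1} = X_{u_2} = X_s$ with everything else independent) whose factorisation over $G^* $ would force $u_1 \indep u_2$, a contradiction. You instead invoke Theorem \ref{necessaryCondition} on the descendant-induced subgraph, observe that $\pa_{H_s}(s)$ is squeezed to equal $\ch_G(s)$ (the skeleton containment $H_s^\sim \supset G^\rM[\{s\}\cup\des_G(s)]$ forces every edge $s$--$c$ into $H_s$, while $H_s \subset G^*[\{s\}\cup\des_G(s)]$ forces its orientation $c \to s$ and rules out any other in-edges at $s$), and then read off completeness of $\ch_G(s)$ from perfectness of $H_s$. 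Both arguments are valid and non-circular, since Theorem \ref{necessaryCondition} is established before this result. Your route is purely graph-theoretic and shows the statement is really a corollary of the local-perfectness necessary condition, but it rests on the full weight of Proposition \ref{one-root-proposition}; the paper's counterexample is more elementary and self-contained, and has the minor virtue of exhibiting a concrete distribution witnessing $\cP^{G^*} \not\supset \cP^G$ when two children are unjoined.
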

\begin{proof}
    $(\impliedby)$ If $\pa_G(s), \ch_G(s)$ are complete for all $s \in N$ and $G' = G^*$ this implies that $G'^\sim \supset G^\rM$ and $G'$ is perfect. The result now follows from Theorem \ref{sufficientCondition}. \\
    $(\implies)$ The fact that the parents much be complete follows from Proposition \ref{joinParentsInG}. Now assume by contradiction that there exists an $s \in N$ such that $u_1, u_2 \in \ch_G(s)$ are not joined. Consider the distribution $P \in \cP^G$ such that $X_{u_1}$ and $X_{u_2}$ are equal to $X_s$ and all other nodes (including $X_s$ itself) are independently distributed. Since $P$ must factorise over $G'$ we can write its density $p$ as
    \begin{align}
        p(x) = \prod_{s\in N} k^s\left(x_s|x_{\pa_{G'}(s)}\right).
    \end{align}
    Since $u_1$ and $u_2$ are both independent of their parents in $G'$, their kernels will be simply of the form $k^{u_1}\left(x_{u_1}\right), k^{u_2}\left(x_{u_2}\right)$ respectively. This however implies that $u_1$ and $u_2$ are themselves independent which contradicts the construction of $P$. 
\end{proof}

\begin{corollary}
    Let $G$ be a DAG. $\cP^{G} = \cP^{G^*}$ if and only if $\pa_G(s)$ and $\ch_G(s)$ are complete for all $s \in N$.
\end{corollary}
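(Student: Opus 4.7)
The plan is to read the corollary as the conjunction of the two inclusions $\cP^{G} \supset \cP^{G^*}$ and $\cP^{G^*} \supset \cP^{G}$, and apply Theorem \ref{primeIsStar} to each of them separately. Since the theorem already characterises when $\cP^{G^*} \supset \cP^{G}$ holds (in terms of completeness of the parent and child sets in $G$), the only thing to think about is how to get the reverse inclusion from the same theorem.

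The trick I would use is the involution $G \mapsto G^*$: note that $(G^*)^* = G$, so applying Theorem \ref{primeIsStar} with $G$ replaced by $G^*$ (and $G^*$ replaced by $(G^*)^* = G$) yields that $\cP^{G} \supset \cP^{G^*}$ holds if and only if $\pa_{G^*}(s)$ and $\ch_{G^*}(s)$ are complete for every $s \in N$. Since reversing edges turns parents into children and vice versa, we have $\pa_{G^*}(s) = \ch_G(s)$ and $\ch_{G^*}(s) = \pa_G(s)$, and ``complete'' is a property of the induced subgraph's skeleton which is unchanged by reversal. Hence this condition is exactly the same as the one obtained from the first application of Theorem \ref{primeIsStar}, namely that $\pa_G(s)$ and $\ch_G(s)$ are complete for every $s \in N$.

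Putting the two applications together: both inclusions hold simultaneously if and only if $\pa_G(s)$ and $\ch_G(s)$ are complete for all $s \in N$, which is precisely the equality $\cP^G = \cP^{G^*}$. I do not anticipate any obstacle — the proof is essentially a symmetry observation, and the single point that needs care is the identification of parents in $G^*$ with children in $G$ (and the remark that completeness, being a condition on the skeleton, is invariant under edge reversal).
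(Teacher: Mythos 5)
Your proof is correct and follows essentially the same route as the paper: apply Theorem \ref{primeIsStar} twice (once to $G$ and once to $G^*$, using $(G^*)^*=G$) and observe that $\pa_{G^*}(s)=\ch_G(s)$, $\ch_{G^*}(s)=\pa_G(s)$, with completeness preserved under edge reversal, so the two conditions coincide. No gaps.
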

\begin{proof}
    This follows from twice applying Theorem \ref{primeIsStar} for $G=G$ and $G=G^*$ respectively and noting that completeness of $\pa_G(s)$ and  $\ch_G(s)$ implies completeness of $\pa_{G^*}(s)$ and $\ch_{G^*}(s)$.
\end{proof}

\subsection{Conditions in terms of single edge operations}

This paper discusses results related to the inversion of Bayesian networks in the sense of Goal II. In the proof of Proposition \ref{one-root-proposition}, we suggested an algorithm for inverting $G$, that starts by reversing all  edges of $G$ at once and then add edges where necessary. 
In this section we are looking at obtaining an inverse of $G$ by reversing the edges one by one, and potentially adding edges where necessary, based on a classical result often called Meek conjecture\footnote{It should be maybe more appropriately called Meek-Chickering Theorem, since it was proved in \citet{chickering2002optimal}.} stated below. An edge $(s,t)$ is called \emph{covered} when $\pa_G(t) = \pa_G(s) \cup \{s\}$. Note that reversing a covered edge preserves acyclicity of the graph. \\

\citet{meek1997graphical} states the following conjecture:
\begin{theorem}[Meek conjecture]
    Let $G=(N,E)$ and $G'=(N,E')$ be DAGs. $\cond$ if and only if there exists a sequence of DAGs $H_1, ..., H_n$ such that $H_1 = G'$ and $H_n = G$ and $H_{i+1}$ is obtained from $H_i$ by one of the following operations:
    \begin{itemize}
        \item[(a)] covered edge reversal,
        \item[(b)] edge removal.
    \end{itemize}  
\end{theorem}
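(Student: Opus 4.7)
The plan is to split the biconditional into the routine $(\Leftarrow)$ and the substantive $(\Rightarrow)$, the latter being the actual content of what is usually called Chickering's theorem.

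For $(\Leftarrow)$, I would argue that each single-step operation weakens (or preserves) the set of representable distributions. Edge removal gives $\cP^{H_{i+1}} \subset \cP^{H_i}$ at once by Lemma \ref{inclusionedgeset}. A covered edge reversal of $s \to t$ (with $\pa_{H_i}(t) = \pa_{H_i}(s) \cup \{s\}$) actually preserves the set of representable distributions: only the kernels at $s$ and $t$ are affected, and for any factorization over $H_i$, Bayes' rule rewrites
\begin{equation*}
k^s\bigl(x_s \mid x_{\pa_{H_i}(s)}\bigr) \, k^t\bigl(x_t \mid x_s, x_{\pa_{H_i}(s)}\bigr)
= \widetilde{k}^t\bigl(x_t \mid x_{\pa_{H_i}(s)}\bigr) \, \widetilde{k}^s\bigl(x_s \mid x_t, x_{\pa_{H_i}(s)}\bigr),
\end{equation*}
which is exactly the factorization pattern required by $H_{i+1}$, since $\pa_{H_{i+1}}(t) = \pa_{H_i}(s)$ and $\pa_{H_{i+1}}(s) = \pa_{H_i}(s) \cup \{t\}$. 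Because the covered condition is symmetric under the reversal, $\cP^{H_i} = \cP^{H_{i+1}}$ in this case. Chaining these inclusions along $H_1 = G', \ldots, H_n = G$ yields $\cP^{G'} \supset \cP^G$.

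For $(\Rightarrow)$ my approach would follow \citet{chickering2002optimal}. The idea is to construct the desired sequence by induction on the number of edges of $G'$ whose orientation or presence differs from $G$. The inductive step needs a structural lemma: whenever this discrepancy is positive, one can locate either an edge in $G'$ absent from the skeleton of $G$ (which one deletes), or an edge oriented oppositely in $G$ but already covered in $G'$ (which one reverses). If no discrepant edge is in such a favourable configuration, one first prepares the situation by a preliminary sequence of covered edge reversals. The discipline for these preparatory reversals is to process nodes in a topological order of $G$ from sinks toward roots, ensuring that once a node has been matched, its descendants-in-$G$ subgraph in the current graph coincides with that of $G$.

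The main obstacle is precisely this preparation step: showing that the required covered edge reversals are available without creating cycles and without breaking the invariant $\cP^{(\cdot)} \supset \cP^G$ along the way. Here one exploits Theorem \ref{thmDSep}, which converts the inclusion of model classes into d-separation constraints; these constraints are exactly what forces the local parent structure at each processed node to be amenable to covered reversal. Since this technical core is carried out at length in \citet{chickering2002optimal}, in the present paper I would spell out the easy direction in full and defer the converse to that reference rather than reprove it here.
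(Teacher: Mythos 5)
The paper does not prove this statement at all: it is quoted as a classical result (Meek's conjecture, established by \citet{chickering2002optimal}), so your decision to defer the substantive direction $(\Rightarrow)$ to that reference is exactly what the paper itself does, and is the right call. What you add beyond the paper is a proof of $(\Leftarrow)$, and that argument is correct and fits the paper's machinery well: edge removal gives $\cP^{H_{i+1}} \subset \cP^{H_i}$ by Lemma \ref{inclusionedgeset}, and for a covered edge $s \to t$ the identity $\pa_{H_i}(t) = \pa_{H_i}(s) \cup \{s\}$ means only the two kernels at $s$ and $t$ change, and the Bayes-rule swap of the joint density of $(x_s, x_t)$ given $x_{\pa_{H_i}(s)}$ produces precisely the factorisation required by $H_{i+1}$; symmetry of the covered condition then gives $\cP^{H_i} = \cP^{H_{i+1}}$, and chaining the inclusions from $H_1 = G'$ to $H_n = G$ yields $\cP^{G'} \supset \cP^G$. (One could note, as the paper does, that covered edge reversal preserves acyclicity, so each $H_{i+1}$ is indeed a DAG.) Your sketch of $(\Rightarrow)$ is only a rough outline of Chickering's induction and would not stand alone as a proof --- in particular the claim that the preparatory covered reversals can always be found is the entire technical difficulty --- but since you explicitly defer that core to \citet{chickering2002optimal}, this matches the paper's own treatment and introduces no gap.
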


This result suggests the outline of an algorithm for the inversion of a Bayesian network $G$. This algorithm starts with $G = L_1$. $L_{i+1}$ is obtained from $L_i$ by performing the inverse of operation (a) or (b), i.e.\ going from $L_{i+1}$ to $L_{i}$ can be done by applying operation (a) or (b). Since a covered edge remains covered after reversal, this operation is equal to its inverse. The inverse of removing an edge is simply adding an edge such that acyclicity is maintained. The goal is to reverse all edges present in $G$ without adding parents of $\leaves(G)$ that are not themselves in $\leaves(G)$. 

\subsection{Restricting the set of possible kernel functions}
The results derived in the above discuss the question what conditions $G'$ must satisfy such that for every $P \in \cP^G$, $\cK^{G'}$ contains a version of the conditional distribution $P_{N\setminus \leaves(G)|\leaves(G)}$. Here it is implied that we allow for all possible kernel functions $k^s$ in the definitions of $\cP^G$ and $\cK^{G'}$. In practice, however, restrictions are often put on the space of possible kernel functions. A common choice \citep{kingma2013auto} is to allow for only Gaussian kernel functions, of the form 
\begin{align}
    k^s\left(x_s | x_{\pa(s)};\theta_s\right) &\propto \exp\left(-\frac{1}{2} \left(\mu_{\theta_s}\left(x_{\pa(s)}\right) - x_s\right)^2\right)\\
    \mu_{\theta_s}\left(x_{\pa(s)}\right) &= f\left(\sum_{t \in \pa(s)} \theta_{(s;t)} x_t\right),
\end{align}
with $f$ some fixed possibly nonlinear function, and parametrised by the vectors $\theta_s = \{\theta_{(s,t)} \in \bR : t \in \pa(s)\}$. We will now investigate which results remain valid for the restricted case. Given a subset $R$ of kernel functions, we will denote the restricted spaces of probability distributions and Markov kernels factorising over $G$ by $\cP_R^G, \cK_R^G$ respectively. Before we dive into the results for general restrictions, we start by examining the case where $R$ is the set of Gaussian kernel functions defined above. Consider the pair of graphs $G, G'$ in Figure \ref{fig:simpleGraphs}. 
\begin{figure}[ht]
    \centering
    \begin{tikzpicture}
        \pgfmathsetmacro{\gspace}{1.3}
        \node (a) at (0,0) {
            \begin{tikzpicture}
                \node [neuron] (a) at (0,0) {s};
                \node [neuron] (b) at (0,\gspace) {t};
                \draw [->] (b) to (a);
                \node at (-.7, -.2) {$G$};
            \end{tikzpicture}
        };
        \node (b) [right=of a] {
            \begin{tikzpicture}
                \node [neuron] (a) at (0,0) {s};
                \node [neuron] (b) at (0,\gspace) {t};
                \draw [->] (a) to (b);
                \node at (-.7, -.2) {$G'$};
            \end{tikzpicture}
        };
    \end{tikzpicture}
    \caption{Pair of graphs $G, G'$}
    \label{fig:simpleGraphs}
\end{figure}
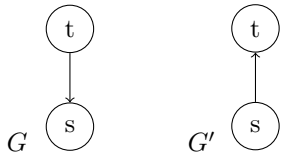
It is clear that this pair of graphs satisfies our original \text{Goal I}. However, when we restrict to the set Gaussian kernel functions, we are no longer able to model the posterior distribution exactly, as we will show now. Consider the distribution in $\cP^G_{R}$ given by
\begin{align}
    X_t &\sim \cN(0,1)\\
    X_s &\sim \cN\left(f\left(X_{t}\right), 1\right).
\end{align}
If the distribution $P_{t|s}$ would be in $\cK^{G'}_{R}$, we would need that the joint density of $X_t, X_s$ satisfies the following proportionality as a function of $x_t$
\begin{align}
    p(x_t, x_s) \propto \exp(-a x_t^2 + b x_t),
\end{align}
where only $b$ may depend on $x_s$. Working out the actual joint density gives 
\begin{align}
    p(x_t, x_s) \propto \exp(-\frac{1}{2}(x_t^2 + f(x_t)^2 + x_s f(x_t) + x_s^2)).
\end{align}
We can conclude that we only have that $P_{t|s} \in \cK^{G'}_{R}$ if $f$ is a linear function.\footnote{Note that this remains true even when we loosen the restriction and allow $\cK^{G'}_{R_{\tilde{f}}}$ to use a different function $\tilde{f}$, e.g. $\tilde{f} = f^{-1}$.}  From this example, we can conclude that the conditions that were sufficient for the unrestricted case, are in general not sufficient in the restricted case.\\

Now we look at the validity of the presented results under some general restrictions laid on the class of all kernel functions, i.e.\ $R$ can be any subclass of kernel functions. We start with the equivalence of the two goals, Proposition \ref{connectVisibleNodes}. Recall that the proposition shows that finding a $G'$ such that $\cP^{G'} \supset \cP^G$ and all parents in $G'$ of nodes in $\leaves(G)$ are themselves in $\leaves(G)$, is both a necessary (statement (a)) and sufficient condition (statement (b)) to satisfy Goal I. It is easy to see that statement (b) is still valid for the restricted case, i.e.it is still a sufficient condition. However for (a) we used that when all the nodes in $\roots(G)$ are connected, any density function can be written as $p\left(x_{\roots(G)}\right) = \prod_{s\in \roots(G)} k^s\left(x_s|x_{\pa_G(s)}\right)$. This is no longer the case when we restrict the space of possible kernel functions. We have that the condition is only necessary if for every $P \in \cP_R^G$, the marginal distribution $P_{\leaves(G)}$ factorises over a complete DAG of the leaves of $G$. A slightly weaker necessary condition for Goal I still holds in general, namely that for every subset $S \subset N\setminus \leaves(G)$ we need that $\cP^{G'[S]}_R \supset \cP^{G[S]}_R$. \\ 

For Theorem \ref{thmDSep}, note that conditions $(ii)-(iv)$ only relate to the graph structures of $G$ and $G'$. Therefore these conditions will still be equivalent for the restricted case. The implication $(ii) \implies (i)$ does not hold in general, which was exemplified by the Gaussian kernel functions above. The implication $(i) \implies (ii)$, on the other hand, does still hold, under the extra assumption that the restriction $R$ is such that for any graph $G$, for all $A, B, S \subset N$ such that $A \not\perp_G B \mid S$, there is a $P \in \cP_R^G$ for which $A \not\indep B \mid S$. We will sketch how this assumption is satisfied for the Gaussian kernel functions described above. Let $A, B, S \subset N$ such that $A \not\perp_G B \mid S$. This implies that there is a trail $\gamma = (u_0, ..., u_n)$ in $G$ from $u_0 = a \in A$ to $u_n = b \in B$ that is unblocked by $S$. We assume that the descendants of the v-structures on this path do not intersect the trail. If this is the case we replace the part of the original trail between the v-structure and the intersecting node by the trail going via the descendants of the former v-structure and keep doing this until all these type of v-structures are removed. Now let $\theta_{(s,t)} = 1$ for all $s=u_i, t=u_j$ with $u_i, u_j \in \gamma, u_j \in \pa_G(u_i), |j-i| = 1$ and all $s,t$ subsequent descendants of a v-structure on the trail and zero otherwise. It can be shown\footnote{As an example, one can consider a DAG consisting of a single trail $(a,...,b)$ of binary nodes for which the nodes without parents are sampled i.i.d. ($\mathrm{Bernoulli}(.5)$) and the rest of the nodes are the sum of their parents modulo 2. If we condition on the nodes that are v-structures (i.e.\ have parents on both sides) it is easy to see that $a$ and $b$ are not independent.} that for this distribution $a \not\indep b \mid S$ and therefore $A \not\indep B \mid S$. 
With this extra assumption we will now show $(i) \implies (ii)$. Suppose $A, B, S \subset N$ such that $A \perp_{G'} B \mid S$. This implies that for all $P \in \cP^{G'}_R$, we have $A \indep B \mid S$. Now suppose by contradiction that $A \not\perp_G B \mid S$. By the assumption, there must be a $P \in \cP^G_R$ for which $A \not\indep B \mid S$, which would contradict (a). Therefore $A \perp_G B \mid S$ which shows $(i) \implies (ii)$.  \\

Theorem \ref{sufficientCondition} is only a sufficient condition which is, by the Gaussian kernel function example, not sufficient any more in the restricted case. Theorem \ref{necessaryCondition} on the other hand is only a necessary condition. The proof of this theorem only uses the necessity of the conditions in Theorem \ref{thmDSep} which we showed above are still valid in the restricted case. We conclude that therefore Theorem 3 also still holds in the restricted case. \\

To conclude this section we summarise the results for the restricted case. We saw that we only have a slightly weaker necessary condition for Goal I, namely that for every subset $S \subset N\setminus \leaves(G)$ we need that $\cP^{G'[S]}_R \supset \cP^{G[S]}_R$. Necessary conditions for this latter condition are then provided by Theorem \ref{thmDSep} and \ref{necessaryCondition}, which are still valid for the restricted case.

\section*{Conclusion}
In this paper, we introduce some necessary and some sufficient conditions for the recognition network to be able to model the exact posterior distribution of a generative Bayesian network. In case that the generative network has a single root, the necessary and sufficient conditions coincide. However, for multiple roots there is still a gap between both conditions.

\subsection*{Further study directions}
A further direction of study could be to find a single necessary and sufficient condition for the general case. \\
Another interesting question is the following: ``What is the smallest number of edges in an inversion $G'$ of $G$?". Using the results on single edge operations, one could try to find an algorithm that finds an optimal inversion of $G$. \\
It is generally believed that the recognition network needs many edges to make exact modelling of the posterior distribution possible (Welling, personal communication, 2022). Therefore, the number of edges in the recognition network will be reduced to make it computationally efficient. In practice, this approximation does not seem to affect the quality of the inference. This phenomenon remains poorly understood, but very relevant to the computational side of machine learning. 

\section*{Acknowledgements}
The authors would like to thank the reviewers for helpful comments. JvO would like to thank Milan Studený and Martijn Oei for helpful discussions and comments. JvO and NA acknowledge the support of the Deutsche Forschungsgemeinschaft Priority Programme “The Active Self” (SPP 2134). Lastly, JvO and PvH would like to thank Floris Triest for providing a conducive working environment. 

\bibliography{references}

\begin{thebibliography}{19}
\providecommand{\natexlab}[1]{#1}
\providecommand{\url}[1]{\texttt{#1}}
\expandafter\ifx\csname urlstyle\endcsname\relax
  \providecommand{\doi}[1]{doi: #1}\else
  \providecommand{\doi}{doi: \begingroup \urlstyle{rm}\Url}\fi

\bibitem[Castelo and Ko\v{c}ka(2003)]{castelo2003inclusion}
R.~Castelo and T.~Ko\v{c}ka.
\newblock On inclusion-driven learning of {B}ayesian networks.
\newblock \emph{Journal of Machine Learning Research}, 4\penalty0 (Sep):\penalty0 527--574, 2003.

\bibitem[Chickering(2002)]{chickering2002optimal}
D.~M. Chickering.
\newblock Optimal structure identification with greedy search.
\newblock \emph{Journal of machine learning research}, 3\penalty0 (Nov):\penalty0 507--554, 2002.

\bibitem[Cowell et~al.(1999)Cowell, Dawid, Lauritzen, and Spiegelhalter]{cowell1999}
R.~G. Cowell, P.~Dawid, S.~L. Lauritzen, and D.~J. Spiegelhalter.
\newblock \emph{Probabilistic Networks and Expert Systems}.
\newblock Springer-Verlag New York, 1999.

\bibitem[Dayan et~al.(1995)Dayan, Hinton, Neal, and Zemel]{dayan1995helmholtz}
P.~Dayan, G.~E. Hinton, R.~M. Neal, and R.~S. Zemel.
\newblock The helmholtz machine.
\newblock \emph{Neural computation}, 7\penalty0 (5):\penalty0 889--904, 1995.

\bibitem[Dudley(2018)]{dudley2018real}
R.~M. Dudley.
\newblock \emph{Real analysis and probability}.
\newblock CRC Press, 2018.

\bibitem[Flesch and Lucas(2007)]{flesch2007markov}
I.~Flesch and P.~J. Lucas.
\newblock Markov equivalence in {B}ayesian networks.
\newblock In \emph{Advances in probabilistic graphical models}, pages 3--38. Springer, 2007.

\bibitem[Gershman and Goodman(2014)]{gershman2014amortized}
S.~Gershman and N.~Goodman.
\newblock Amortized inference in probabilistic reasoning.
\newblock In \emph{Proceedings of the annual meeting of the cognitive science society}, volume~36, 2014.

\bibitem[Kingma and Welling(2013)]{kingma2013auto}
D.~P. Kingma and M.~Welling.
\newblock Auto-encoding variational {B}ayes.
\newblock \emph{arXiv preprint arXiv:1312.6114}, 2013.

\bibitem[Koller and Friedman(2009)]{koller2009probabilistic}
D.~Koller and N.~Friedman.
\newblock \emph{Probabilistic graphical models: principles and techniques}.
\newblock MIT press, 2009.

\bibitem[Lauritzen(1996)]{lauritzen1996}
S.~L. Lauritzen.
\newblock \emph{Graphical models}, volume~17.
\newblock Clarendon Press, 1996.

\bibitem[Louizos et~al.(2017)Louizos, Welling, and Kingma]{louizos2017learning}
C.~Louizos, M.~Welling, and D.~P. Kingma.
\newblock Learning sparse neural networks through $ l\_0 $ regularization.
\newblock \emph{arXiv preprint arXiv:1712.01312}, 2017.

\bibitem[L{\"o}we et~al.(2022)L{\"o}we, Madras, Zemel, and Welling]{lowe2022amortized}
S.~L{\"o}we, D.~Madras, R.~Zemel, and M.~Welling.
\newblock Amortized causal discovery: Learning to infer causal graphs from time-series data.
\newblock In \emph{Conference on Causal Learning and Reasoning}, pages 509--525. PMLR, 2022.

\bibitem[Meek(1997)]{meek1997graphical}
C.~Meek.
\newblock \emph{Graphical Models: Selecting causal and statistical models}.
\newblock PhD thesis, PhD thesis, Carnegie Mellon University, 1997.

\bibitem[Molchanov et~al.(2019)Molchanov, Kharitonov, Sobolev, and Vetrov]{molchanov2019doubly}
D.~Molchanov, V.~Kharitonov, A.~Sobolev, and D.~Vetrov.
\newblock Doubly semi-implicit variational inference.
\newblock In \emph{The 22nd International Conference on Artificial Intelligence and Statistics}, pages 2593--2602. PMLR, 2019.

\bibitem[Pearl(1982)]{pearl1982reverend}
J.~Pearl.
\newblock Reverend {B}ayes on inference engines: A distributed hierarchical approach.
\newblock In \emph{Proceedings of the Second National Conference on Artificial Intelligence}, pages 133--136, 1982.

\bibitem[Studeny(2005)]{studeny2005probabilistic}
M.~Studeny.
\newblock \emph{Probabilistic Conditional Independence Structures}.
\newblock Information Science and Statistics. Springer London, 2005.

\bibitem[Verma and Pearl(1990)]{verma1990equivalence}
T.~Verma and J.~Pearl.
\newblock Equivalence and synthesis of causal models.
\newblock In \emph{Proceedings of the Sixth Annual Conference on Uncertainty in Artificial Intelligence}, pages 255--270, 1990.

\bibitem[Wainwright et~al.(2008)Wainwright, Jordan, et~al.]{wainwright2008graphical}
M.~J. Wainwright, M.~I. Jordan, et~al.
\newblock Graphical models, exponential families, and variational inference.
\newblock \emph{Foundations and Trends{\textregistered} in Machine Learning}, 1\penalty0 (1--2):\penalty0 1--305, 2008.

\bibitem[Webb et~al.(2018)Webb, Golinski, Zinkov, Rainforth, Teh, Wood, et~al.]{webb2018faithful}
S.~Webb, A.~Golinski, R.~Zinkov, T.~Rainforth, Y.~W. Teh, F.~Wood, et~al.
\newblock Faithful inversion of generative models for effective amortized inference.
\newblock \emph{Advances in Neural Information Processing Systems}, 31, 2018.

\end{thebibliography}

\end{document}